\theoremstyle{plain}
\theoremstyle{definition}
\theoremstyle{remark}
\newcommand\numberthis{\addtocounter{equation}{1}\tag{\theequation}}
\newcommand{\va}{\mathbf{a}}
\newcommand{\vh}{\mathbf{h}}
\newcommand{\vx}{\mathbf{x}}
\newcommand{\vu}{\mathbf{u}}
\newcommand{\vf}{\mathbf{f}}
\newcommand{\vk}{\mathbf{k}}
\newcommand{\vw}{\mathbf{w}}
\newcommand{\vA}{\mathbf{A}}
\newcommand{\vK}{\mathbf{K}}
\newcommand{\Rb}{\mathbb{R}}
\newcommand{\Qxx}{Q_{\mathbf{x}\mathbf{x}}}
\newcommand{\Qxu}{Q_{\mathbf{x}\mathbf{u}}}
\newcommand{\Qux}{Q_{\mathbf{u}\mathbf{x}}}
\newcommand{\Quu}{Q_{\mathbf{u}\mathbf{u}}}
\newcommand{\lxx}{l_{\mathbf{x}\mathbf{x}}}
\newcommand{\rd}{\mathrm{d}}
\title{Solving Inverse Problems via Diffusion Optimal Control}
\author{%
  Henry Li \thanks{Work partially completed during an internship at Bosch AI.}\\
  Yale University\\
  \texttt{henry.li@yale.edu} \\
  \And
  Marcus Pereira \\
  Bosch Center for Artificial Intelligence\\
  \texttt{marcus.pereira@us.bosch.com} \\
}
\begin{document}

\maketitle

\setcounter{footnote}{0} 

\begin{abstract}
Existing approaches to diffusion-based inverse problem solvers frame the signal recovery task as a probabilistic sampling episode, where the solution is drawn from the desired posterior distribution. This framework suffers from several critical drawbacks, including the intractability of the conditional likelihood function, strict dependence on the score network approximation, and poor $\mathbf{x}_0$ prediction quality. We demonstrate that these limitations can be sidestepped by reframing the generative process as a discrete optimal control episode. We derive a diffusion-based optimal controller inspired by the iterative Linear Quadratic Regulator (iLQR) algorithm. This framework is fully general and able to handle any differentiable forward measurement operator, including super-resolution, inpainting, Gaussian deblurring, nonlinear deblurring, and even highly nonlinear neural classifiers. Furthermore, we show that the idealized posterior sampling equation can be recovered as a special case of our algorithm. We then evaluate our method against a selection of neural inverse problem solvers, and establish a new baseline in image reconstruction with inverse problems\footnote{Code is available at \href{https://github.com/lihenryhfl/diffusion_optimal_control}{\texttt{https://github.com/lihenryhfl/diffusion\_optimal\_control}}.}.

\end{abstract}

\section{Introduction}
Diffusion models \cite{song2019generative,ho2020denoising} have been shown to be remarkably adept at conditional generation tasks \cite{dhariwal2021diffusion,ho2022classifier}, in part due to their iterative sampling algorithm, which allows the dynamics of an uncontrolled prior score function $\nabla_\mathbf{x} \log p_t(\mathbf{x})$ to be directed towards an arbitrary posterior distribution by introducing an additive guidance term $\mathbf{u}$. When this guidance term is the conditional score $\nabla_\mathbf{x} \log p_t(\mathbf{y} | \mathbf{x})$, the resulting sample is provably drawn from the desired conditional distribution $p(\mathbf{x} | \mathbf{y})$ \cite{song2020denoising}.

A central obstacle to this framework is the general difficulty of obtaining the conditional score function $\nabla_\mathbf{x} \log p_t(\mathbf{y} | \mathbf{x}_t)$ due to its dependence on the \textit{noisy} diffusion variate $\mathbf{x}_t$ rather than just the final sample $\mathbf{x}_0$ \cite{chung2023diffusion}. In large-scale conditional generation tasks such as class- or text-conditional sampling the computational overhead of training a time-dependent conditional score function from scratch is deemed acceptable, and is indeed the approach taken by \cite{rombach2022high}, \cite{saharia2022photorealistic}, and many others. However, this solution is not acceptable in inverse problems where the goal is to design a generalized solver that will work in a zero-shot capacity for an arbitrary forward model.

This bottleneck has spawned a flurry of recent research dedicated to approximating the conditional score $\nabla_\mathbf{x} \log p_t(\mathbf{y} | \mathbf{x}_t)$ as a simple function of the \textit{noiseless} likelihood $\log p(\mathbf{y} | \mathbf{x}_0)$ \cite{choi2021ilvr,chung2022improving,rout2024solving,chung2023diffusion,kawar2022denoising,chung2023direct}. However, as we will demonstrate in this work, these approximations impose a significant cost to the performance of the resulting algorithm.

To address these issues, we propose a novel framework built from optimal control theory where such approximations are no longer necessary. By framing the reverse diffusion process as an optimal control episode, we are able to detach the inverse problem solver from the strict requirements of the conditional sampling equation given by \cite{song2020denoising}, while still leveraging the exceptionally powerful prior of the unconditional diffusion process. Moreover, we find that the desired score function directly arises as the Jacobian of the value function. 

We summarize our contributions as follows:
\begin{itemize}
    \item We present diffusion optimal control, a framework for solving inverse problems via the lens of optimal control theory, using pretrained unconditional off-the-shelf diffusion models.

    \item We show that this perspective overcomes many core obstacles present in existing diffusion-based inverse problem solvers. In particular, the idealized posterior sampling score \cite{song2021scorebased} --- approximated by existing methods --- can be recovered exactly as a specific case of our method.

    \item We showcase the advantages of our model empirically with quantitative experiments and qualitative examples, and demonstrate state-of-the-art performance on the FFHQ $256 \times 256$ dataset.
\end{itemize}

\begin{figure*}
    \centering
    \begin{subfigure}
        \centering
        \includegraphics[width=0.475\linewidth]{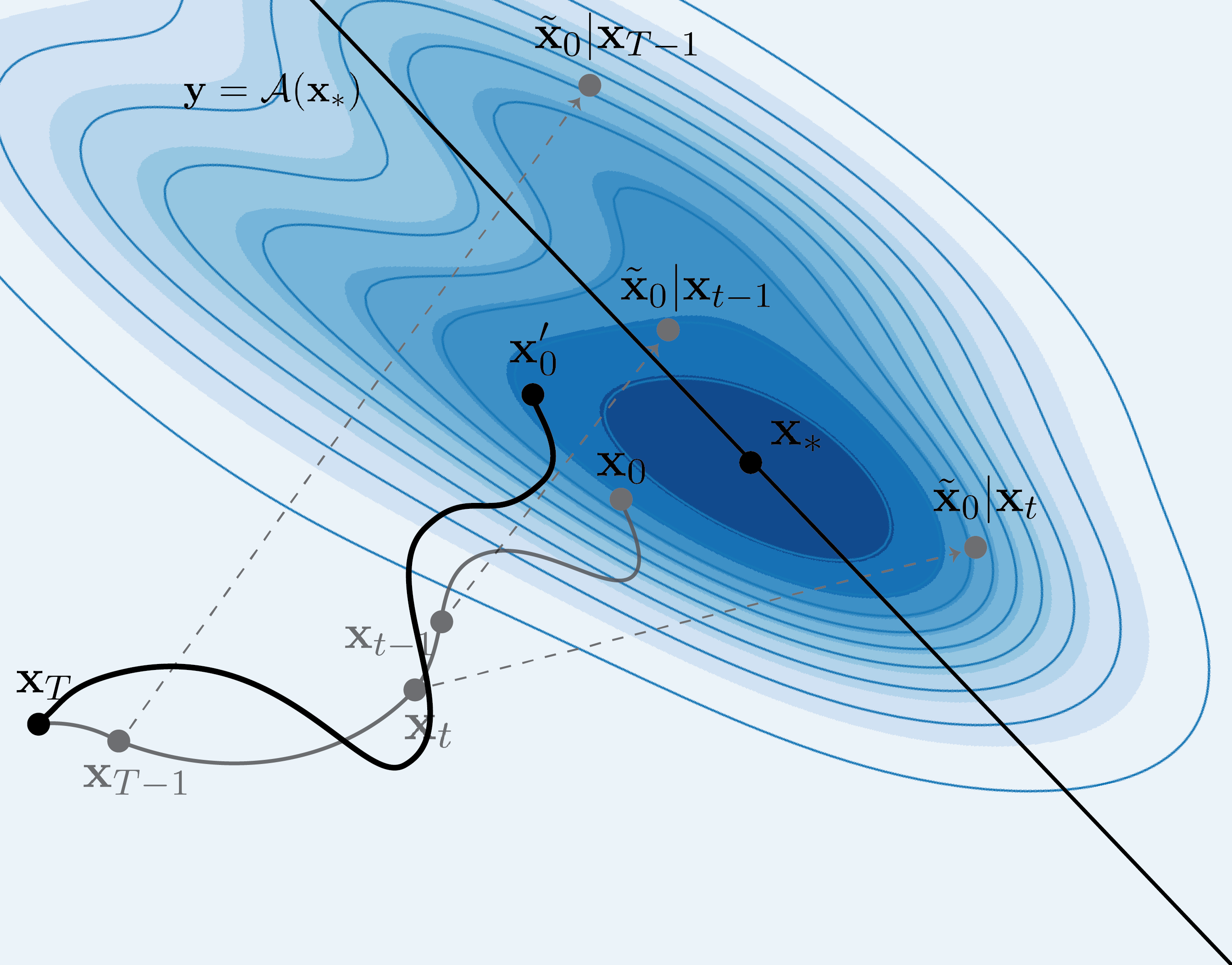}%
        \hfill
        \includegraphics[width=0.475\linewidth]{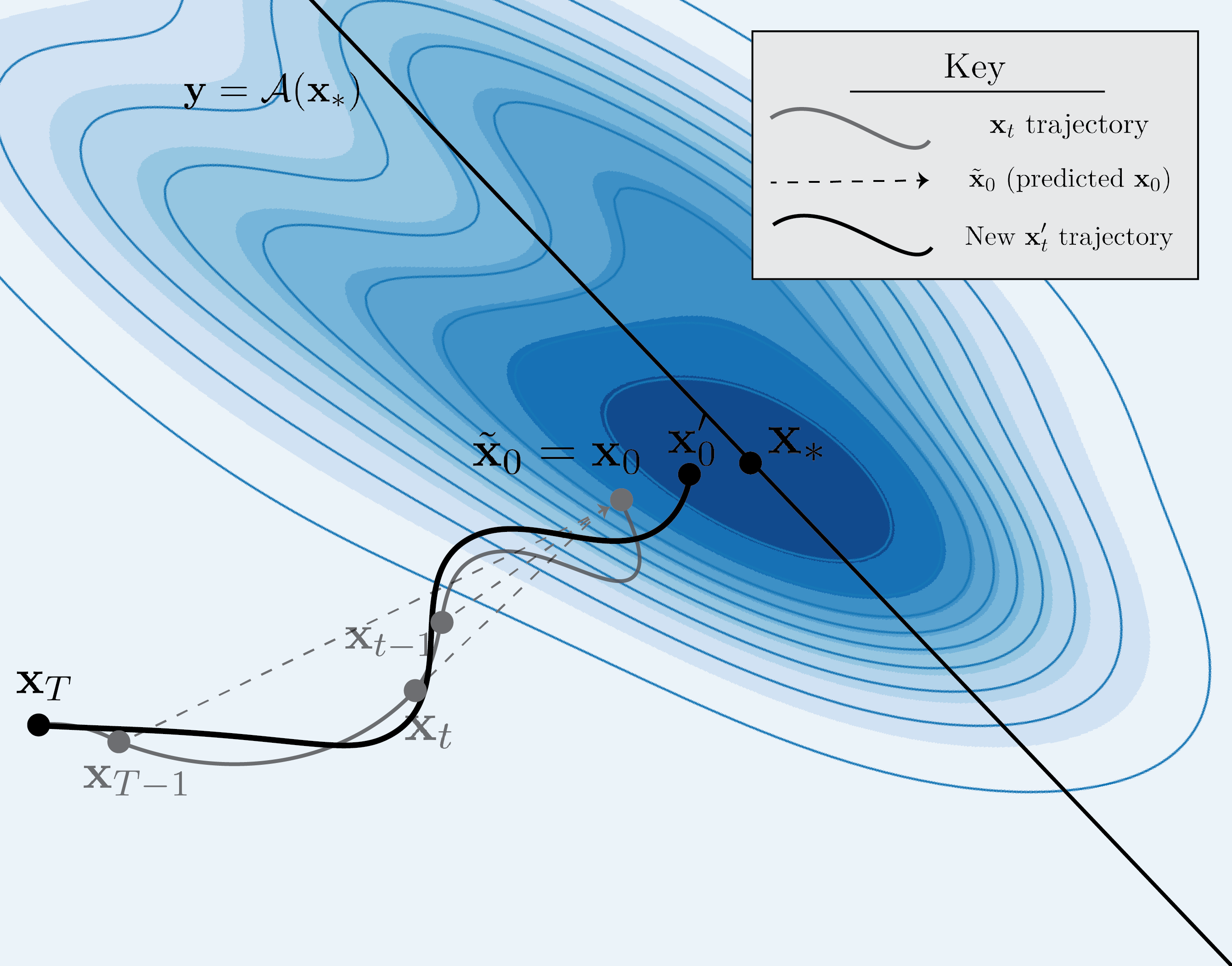}
        \caption{\textbf{Conceptual illustration comparing  a probabilistic posterior sampler to our proposed optimal control-based sampler.} In a probabilistic sampler, the model relies on an approximation $\tilde{\mathbf{x}}_0 \approx \mathbf{x}_0$ to guide each step \textbf{(left)}. We are able to compute $\mathbf{x}_0$ \textbf{exactly} on each step, resulting in much higher quality gradients $\nabla \log p(\mathbf{y} | \tilde{\mathbf{x}}_0)$ and an improved trajectory update \textbf{(right)}.}
    \end{subfigure}
\end{figure*}

\section{Background}

\paragraph{Notation}
We use lowercase letters for denoting scalars $a\in \mathbb{R}$, lowercase bold letters for vectors $\va \in \mathbb{R}^n$ and uppercase bold letters for matrices $\vA \in \mathbb{R}^{m\times n}$. Subscripts indicate Jacobians and Hessians of scalar functions, e.g. $l_\vx \in \Rb^n$ and $\lxx \in \Rb^{n\times n}$ for $l(\vx):\Rb^n\rightarrow\Rb$, respectively. We overload notation for time-dependent variables, where subscripts imply dependence rather than derivatives w.r.t. time, e.g., $\vx_t=\vx(t)$. Furthermore, $V(\vx_t)$ and $Q(\vx_t,\vu_t)$ are scalar functions despite being uppercase, in line with existing optimal control literature \cite{betts1998survey}. 

\subsection{Diffusion Models}
The diffusion modeling literature uses the following reverse-time It\"{o} SDE to generate samples \cite{song2021scorebased},
\begin{equation}
    \label{eq:reverse_sde}
    \rd\vx_t = \big[\vf(\vx_t) - g(t)^2 \nabla_{\vx_t} \log p_t(\vx_t)\big] \rd t + g(t) \rd \vw_t,
\end{equation}
where $\vx_t \in \Rb^n$ is the state vector, $\vf:\Rb^n\rightarrow\Rb^n$ and $g:\Rb\rightarrow\Rb$ are drift and diffusion terms that can take different functional forms (e.g., Variance-Preserving SDEs (VPSDEs) and Variance-Exploding SDEs (VESDEs) in \cite{song2021scorebased}), $\nabla_{\vx_t} \log p_t(\vx_t)$ is the score-function and $\vw_t\in\Rb^n$ is a vector of mutually independent Brownian motions. The above SDE has an associated ODE called the probability-flow (PF) ODE given by
\begin{equation}
    \label{eq:reverse_ode}
    \rd\vx_t = \rd\vx_t + \big[\vf(\vx_t) - \frac{1}{2} g(t)^2 \nabla_{\vx_t} \log p_t(\vx_t)\big] \rd t,
\end{equation}
with the same marginals $p_t(\vx_t)$ as the SDE, which allow for likelihood computation \citep{song2021scorebased,li2024likelihood}. All practical implementations of diffusion samplers require a time-discretization of the PF-ODE. One such discretization is the well-known Euler-discretization which gives,
\begin{equation}
    \label{eq:discretized_reverse_ode}
    \vx_{t - 1} = \vx_t - [\vf(\vx_t) - \frac{1}{2} g(t)^2 \nabla_\vx \log p_t(\vx_t)] \Delta t
\end{equation}
where, $\Delta t$ is the length of the discretization interval and we have reversed the time evolution by changing the sign of the drift. We are not restricted to only using the Euler-discretization and any high-order discretization techniques can also be employed. More concisely, we have,
\begin{equation}
    \label{eq:discrete_time_uncontrolled_model}
    \vx_{t - 1} = \vh(\vx_t),\,\,\text{ where } \vh:\Rb^n\rightarrow\Rb^n
\end{equation}
 which describes the general non-linear dynamics of  the corresponding discrete-time diffusion sampler.

 \begin{figure*}
    \includegraphics[width=\textwidth]{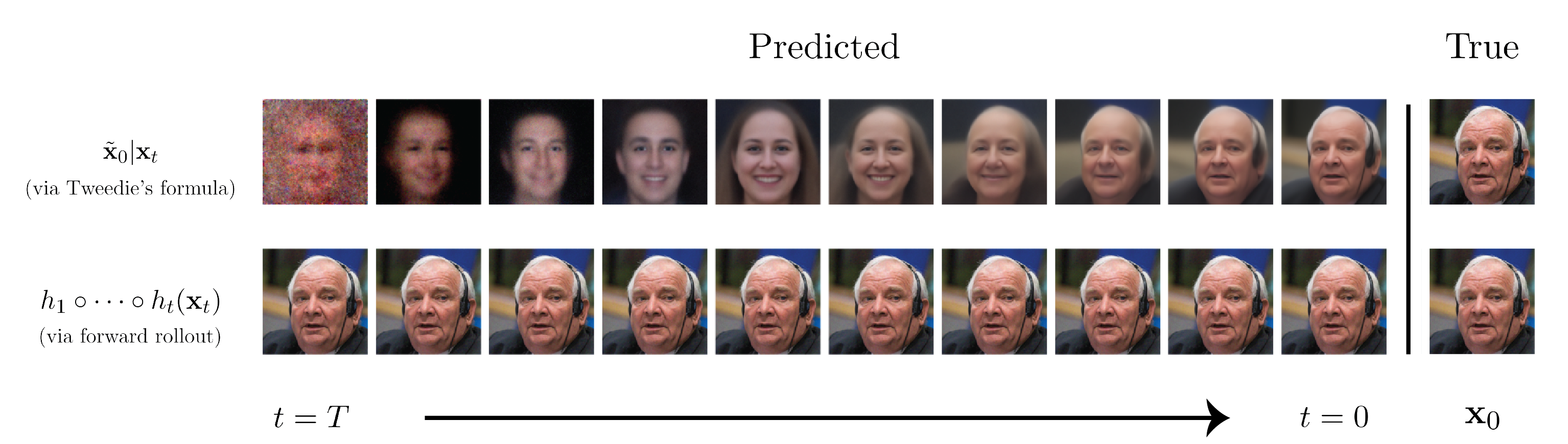}
    \caption{\textbf{Predicted $\mathbf{x}_0$ used in a probabilistic framework (above) compared to ours (below) for a general diffusion trajectory.} The full forward rollout in our proposed framework allows for the predicted $\mathbf{x}_0$ (and therefore $\nabla_{\mathbf{x}_t} \log p(\mathbf{y} | \mathbf{x}_0)$) to be efficiently computed for all $t = 0, \dots, T$.}
    \label{fig:x_0_hat_comparison}
\end{figure*}

\subsection{Posterior Sampling for Inverse Problems}
Inverse problems are a general class of problems where an unknown signal is reconstructed from observations obtained by a forward measurement process \cite{ongie2020deep}. The forward process is usually lossy, resulting in an ill-posed signal recovery task where a \textit{unique} solution does not exist. The forward model can generally be written as
\begin{equation}
    \label{eq:forward_model}
    y = \mathcal{A}(\mathbf{x}_0) + \eta,
\end{equation}
where $\mathcal{A}: \mathbb{R}^n \rightarrow \mathbb{R}^d$ is the forward operator, $y \in \mathbb{R}^d$ the measured signal, $\mathbf{x}_0 \in \mathbb{R}^n$ the unknown signal to be recovered, and $\eta \sim \mathcal{N}(0, \sigma \mathbf{I}_d)$ the noise (with variance $\sigma^2$) in the measurement process.

Given the forward model Eq. \eqref{eq:forward_model} and a measurement $\mathbf{y}$, sampling from the posterior distribution $p_\theta(\mathbf{x} | \mathbf{y})$ can then be performed by solving the corresponding \textit{conditional} It\"{o} SDE
\begin{align}
    \label{eq:dps_sde}
    \rd\mathbf{x} 
    &= [\mathbf{f}(\mathbf{x}) - g(t)^2 \nabla_\mathbf{x} \log p_t(\mathbf{x} | \mathbf{y})] \rd t + g(t) \rd \vw,
\end{align}
where, invoking Bayes rule, 
\begin{equation}
    \nabla_\mathbf{x} \log p_t(\mathbf{x} | \mathbf{y}) = \nabla_\mathbf{x} \log p_t(\mathbf{x}) + \nabla_\mathbf{x} \log p_t(\mathbf{y} | \mathbf{x}).
\end{equation}
As with the unconditional dynamics, Eq. \eqref{eq:dps_sde} has a corresponding ODE
\begin{align}
    \label{eq:dps}
    \rd\mathbf{x} 
    &= [\vf(\mathbf{x}) - \frac{1}{2} g(t)^2 \nabla_\mathbf{x} \log p_t(\mathbf{x} | \mathbf{y})] \rd t,
\end{align}
which has an approximate solution obtained by the Euler discretization
\begin{align}
    \label{eq:dps_discretized}
    \mathbf{x}_{t - 1}
    &= \mathbf{x}_{t} + [f(\mathbf{x}_t) - \frac{1}{2} g(t)^2 \nabla_{\mathbf{x}_t} \log p_t(\mathbf{x}_t | \mathbf{y})] \Delta t.
\end{align}

\subsection{Optimal Control}

Optimal control is the structured and principled approach to the guidance of dynamical systems over time. Many methods have been developed in the optimal control literature and are popularly referred to as \textit{trajectory optimization} algorithms \cite{betts1998survey}. Perhaps the most well-known is the Iterative Linear Quadratic Regulator (iLQR) algorithm which uses a first-order approximation of the dynamics and second-order approximations of the value-function \cite{li2004iterative}.

\begin{figure*}
  \includegraphics[width=\textwidth]{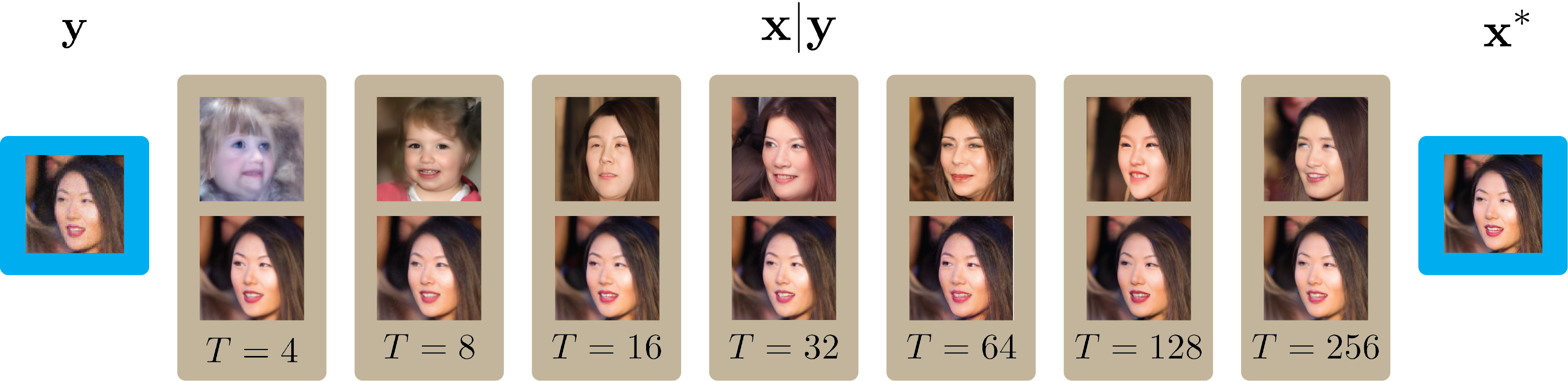}
  \caption{\textbf{Inverse problem solution as a function of total diffusion timesteps $T$ for the $4 \times$ super-resolution task.} Compared to DPS (\textbf{top row}), our method (\textbf{bottom row}) produces solutions that are higher quality, in greater agreement with the inverse problem contraint $\mathcal{A}\mathbf{x} = \mathbf{y}$, and more stable across $T$.}
  \label{fig:result_vs_time}
\end{figure*}

Formally, let us define an arbitrary user-defined global cost function
\begin{equation}
    \label{eq:cost_function}
    J_T = \sum_{t=T}^{1} \ell_t(\vx_t,\,\vu_t) + \ell_0(\vx_0),
\end{equation}
composed of a sum over scalar-valued \textit{running} and \textit{terminal} cost functions $\ell_t$ and $\ell_0$. Optimal control theory dictates that the value function $V(\vx_t,\,t) := \min_{\{\vu_n\}_{n=t}^{n=1}} J_t$ satisfies the following recursive relation also known as \textit{Bellman's Principle of Optimality}
\begin{equation}
    V(\vx_t,\,t) = \min_{\vu_t}\Big[\ell_t(\vx_t,\,\vu_t)+V(\vx_{t-1},\,t-1)\Big].
\end{equation}

The iLQR algorithm centers around approximating the state-action value function, 
\begin{equation}
    \label{eq:state_action_value_function_definition}
    Q(\vx_t,\,\vu_t) := \ell_t(\vx_t,\,\vu_t)+V(\vx_{t-1},\,t-1),
\end{equation}
from which the value function can be recovered as $V(\vx_t,\,t) = \min_{\vu_t} Q(\vx_t,\,\vu_t)$.

Then given a state transition function $\mathbf{x}_t = \mathbf{h}(\mathbf{x}_{t+1}, \mathbf{u}_{t+1})$ where we crucially note that we have defined time to flow \textit{backwards} from $t=T, \dots, 0$, the iLQR algorithm has feedforward and feedback gains
\begin{equation}
    \label{eq:ff_fb_gains}
    \mathbf{k} = -Q_{\mathbf{u}\mathbf{u}}^{-1} Q_{\mathbf{u}} \hspace{.25in} \text{and} \hspace{.25in} \mathbf{K} = -Q_{\mathbf{u}\mathbf{u}}^{-1} Q_{\mathbf{u}\mathbf{x}}
\end{equation}
The update equations can be written as
\begin{equation}
    \label{eq:Riccati_Vx_Vxx}
    V_\mathbf{x} = Q_{\mathbf{x}} - \mathbf{K}^T Q_{\mathbf{u}\mathbf{u}} \mathbf{k}  \hspace{.25in} \text{and} \hspace{.25in}
    V_{\mathbf{x}\mathbf{x}}
    = Q_{\mathbf{x}\mathbf{x}} - \mathbf{K}^T Q_{\mathbf{u}\mathbf{u}} \mathbf{K}.
\end{equation}
Given the feedforward and feedback gains $\{(\mathbf{K}_t, \mathbf{k}_t)\}_{t=0}^T$ and $\bar{\vx}_0 := \vx_0$, we can recursively obtain the locally optimal control at time $t$ as a function of the present states $\vx_t$ and controls $\vu_t$ as
\begin{align}
\label{eq:locally_optimal_action_update}
    \bar{\vx}_t &= \mathbf{h}(\bar{\vx}_{t+1}, \vu_{t+1}^*),\\
    \vu_t^* &= \vu_t + \lambda \vk + \vK(\bar{\vx_t} - \vx_t).
\end{align}
For a more detailed treatment of iLQR as well as a derivation of the equations, please see Appendix \ref{ilqr_derivation}.


\section{Diffusion Optimal Control} 

\begin{algorithm}
\centering
\caption{Diffusion Optimal Control}
\label{alg:output_perturbation}
\begin{algorithmic}
   \State {\bfseries Input:} $\lambda, T, \mathbf{y}, \mathbf{x}_T$
   \State {\bfseries Initialize} $\mathbf{u}_t, \mathbf{k}_t, \mathbf{K}_t$ as $\mathbf{0}$ for $t = 1 \dots T$,
   $\{\mathbf{x}_t'\}_{t=0}^T$ as uncontrolled dynamics
   
   \For{$\texttt{iter}=1$ \bfseries \text{to} $\texttt{num\_iters}$}
        \State $V_\mathbf{x}, V_{\mathbf{x}\mathbf{x}} \leftarrow  \nabla_{\mathbf{x}_0} \log p(\mathbf{y}|\mathbf{x}_0), \nabla_{\mathbf{x}_0}^2 \log p(\mathbf{y}|\mathbf{x}_0)$ \Comment{Initialize derivatives of $V(\mathbf{x}_t, t)$}
        
        \For{$t=1$ {\bfseries to} $T$}
            \State Compute $\mathbf{k}_t$, $\mathbf{K}_t, V_\mathbf{x}, V_{\mathbf{x}\mathbf{x}}$ \Comment{See Eqs. \eqref{eq:ff_fb_gains}, \eqref{eq:Riccati_Vx_Vxx}}
        \EndFor
        
        \For{$t=T$ {\bfseries to} $1$}
            \State $\mathbf{x}_{t - 1} \leftarrow h(\mathbf{x}_t, \lambda \mathbf{k}_t + \mathbf{K}_t(\mathbf{x}_t - \mathbf{x}_t'))$ \Comment{Update $\mathbf{x}_{t-1}$ with new $\mathbf{u}_t$}
            \State $\mathbf{x}_t' \leftarrow \mathbf{x}_t$
        \EndFor
   \EndFor
\end{algorithmic}
\end{algorithm}

We motivate our framework by observing that the reverse diffusion process Eq. \eqref{eq:reverse_sde} is an uncontrolled non-linear dynamical system that evolves from some initial state (at time $t=T$) to some terminal state (at time $t=0$). By injecting control vectors $\mathbf{u}_t$ into this system we can influence its behavior and hence its terminal state (i.e., the generated data) to sample from a desired $p(\mathbf{x} | \mathbf{y})$. There are two obvious ways to inject control into this process:
\begin{enumerate}
    \item In \textbf{input perturbation control}, we apply the $\vu_t$ \textit{before} the diffusion step: 
    \begin{align}
        \label{eq:input_perturbation}
        \vx_{t - 1} &= (\vx_t + \vu_t) - \big[\vf(\vx_t + \vu_t) - \frac{1}{2} g(t)^2 \nabla_\vx \log p_t(\vx_t + \vu_t)\big] \Delta t.
    \end{align}
    \item In \textbf{output perturbation control}, $\vu_t$ is applied \textit{after} the diffusion step:
    \begin{equation}
        \label{eq:output_perturbation}
        \vx_{t-1} = \vx_t - \big[\vf(\vx_t) - \frac{1}{2} g(t)^2 \nabla_\vx \log p_t(\vx_t) \big] \Delta t + \vu_t.
    \end{equation}    
\end{enumerate}

Observe that iLQR is formulated for general discrete-time dynamic processes. When applied specifically to the reverse diffusion dynamics of diffusion models, we are able to make several simplifications. First, we assume that we do not have access to any guidance except at time $t = 0$ --- i.e.,  $\ell_t(\mathbf{x}_t, \mathbf{u}_t)$ does not depend on $\mathbf{x}_t$. 

In the case of \textbf{input perturbation control}, we observe from Eq. \eqref{eq:input_perturbation} that $\vh_{\mathbf{x}} = \vh_{\mathbf{u}}$, whereas \textbf{output perturbation control} implies that $\vh_\mathbf{u} = \mathbf{I}$, resulting in the left and right equations, respectively:
\begin{align}
    \label{eq:input_Qx}
    Q_\mathbf{x} &= \vh_{\mathbf{x}}^T V_\vx' & Q_\mathbf{x} &= \vh_{\mathbf{x}}^T V_\vx'\\
    Q_\mathbf{u} &= \ell_\mathbf{u} + \vh_{\mathbf{x}}^T V_\vx' & Q_\mathbf{u} &= \ell_\mathbf{u} + V_\vx' \\
    Q_{\mathbf{x}\mathbf{x}} &= \vh_{\mathbf{x}}^T V_{\mathbf{x}\mathbf{x}}' \vh_{\mathbf{x}} &  Q_{\mathbf{x}\mathbf{x}} &= \vh_{\mathbf{x}}^T V_{\mathbf{x}\mathbf{x}}' \vh_{\mathbf{x}} \\
    Q_{\mathbf{u}\mathbf{x}} = Q_{\mathbf{x}\mathbf{u}} &= \vh_{\mathbf{x}}^T V_{\mathbf{x}\mathbf{x}}' \vh_{\mathbf{x}} & Q_{\mathbf{u}\mathbf{x}} = Q_{\mathbf{x}\mathbf{u}}^T &= V_{\mathbf{x}\mathbf{x}}' \vh_{\mathbf{x}} \\
    Q_{\mathbf{u}\mathbf{u}} &= \ell_{\mathbf{u}\mathbf{u}} + \vh_{\mathbf{x}}^T V_{\mathbf{x}\mathbf{x}}' \vh_{\mathbf{x}} & Q_{\mathbf{u}\mathbf{u}} &= \ell_{\mathbf{u}\mathbf{u}} + V_{\mathbf{x}\mathbf{x}}'.
\end{align}

The derivatives of $V$ can then be backpropagated using the following equations:
\begin{align}
    V_\mathbf{x} &= Q_{\mathbf{x}} - \mathbf{K}^T Q_{\mathbf{u}\mathbf{u}} \mathbf{k} = Q_{\mathbf{x}\mathbf{x}} - \mathbf{K}^T Q_{\mathbf{u}\mathbf{u}} \mathbf{K} \nonumber  \\
    &= Q_{\mathbf{x}} + Q_{\mathbf{u}\mathbf{x}}^T Q_{\mathbf{u}\mathbf{u}}^{-1} Q_{\mathbf{u}} \\
    V_{\mathbf{x}\mathbf{x}} 
    &= Q_{\mathbf{x}\mathbf{x}} - \mathbf{K}^T Q_{\mathbf{u}\mathbf{u}} \mathbf{K} \nonumber \\
    &= Q_{\mathbf{x}\mathbf{x}} - Q_{\mathbf{u}\mathbf{x}}^T Q_{\mathbf{u}\mathbf{u}}^{-1} Q_{\mathbf{u}\mathbf{x}}.
    \label{eq:doc_vxx}
\end{align}

In high dimensional systems such as Eq. \ref{eq:discretized_reverse_ode}, matrices may be singular. Therefore, a Tikhonov regularized variant of iLQR is often employed, where matrix inverses are regularized by a diagonal matrix $\alpha \mathbf{I}$ \cite{tassa2014control}.

\subsection{High Dimensional Control}
Compared to the dynamics in traditional application areas of optimal control, those we consider in Eqs. (\ref{eq:input_perturbation}- \ref{eq:output_perturbation}) are much higher dimensional in the state $\mathbf{x}$ and control $\mathbf{u}$ variates. Therefore, iLQR faces several unique computational bottlenecks when applied to such control problems. 

In particular, the Jacobian matrices $\vh_\mathbf{x}, \vh_\mathbf{u}$ and the second-order derivative matrices $V_{\mathbf{x}\mathbf{x}}, \Qxx, \Qux, \Qxu,$ and $\Quu$ are particularly expensive to compute, store, and perform downstream operations against. For example, in a three-channel $256 \times 256$ image, these matrices naively contain $(256 \times 256 \times 3)^2 \approx 39B$ parameters.

In Appendix \ref{sec:hdc} we propose and analyze three modifications to the standard iLQR algorithm: \textbf{randomized low rank approximations}, \textbf{matrix-free evaluations}, and action updates via an \textbf{adaptive optimizer}, that significantly reduce runtime and memory constraints while introducing minimal deterioration to performance on inverse problem solving tasks.


\section{Improved Posterior Sampling}

\begin{figure}
  \centering
  \includegraphics[width=\textwidth]{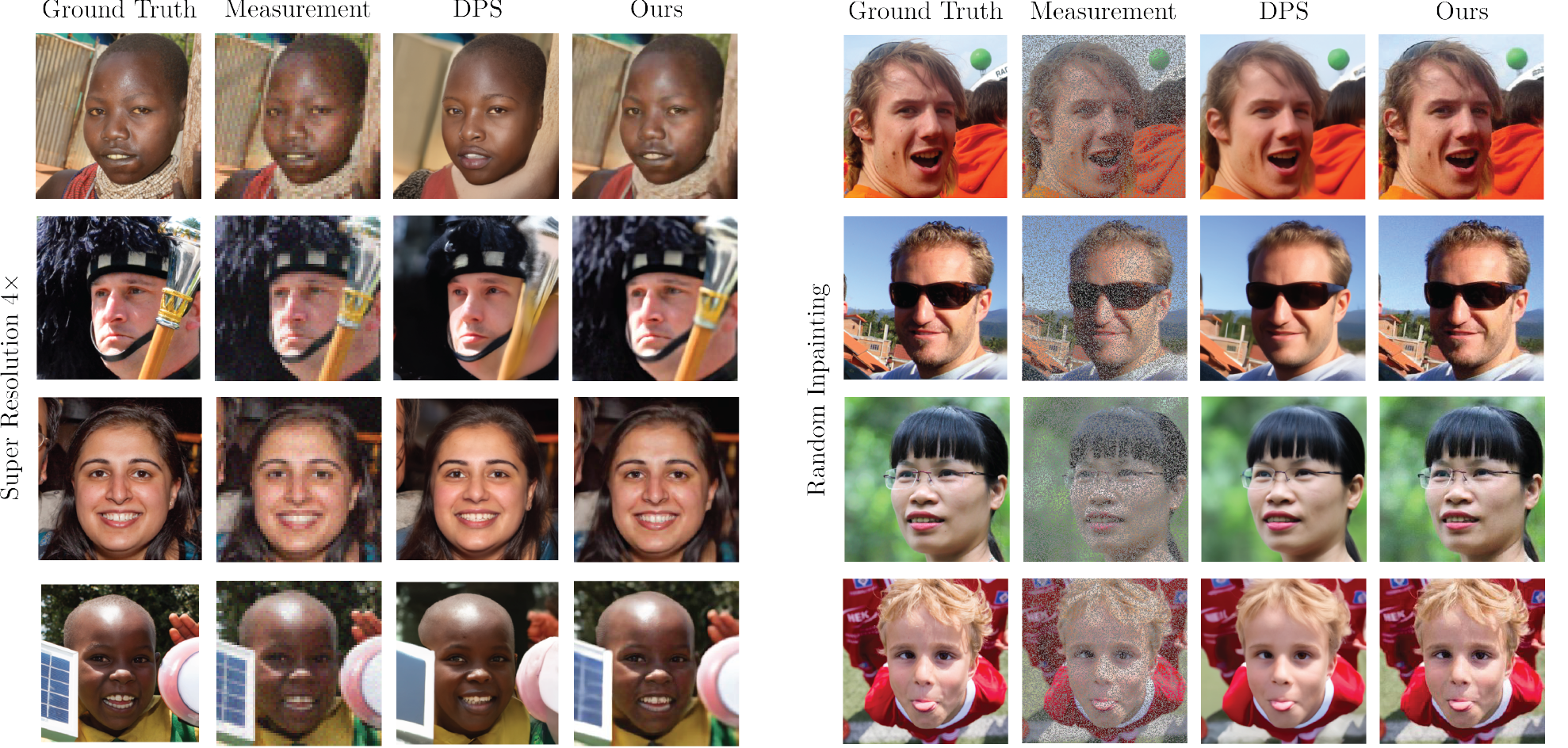}
  \caption{\textbf{Examples from inverse problem tasks on FFHQ $256 \times 256$.} From left to right each column contains ground truth, measurement, Diffusion Posterior Sampling (DPS), and ours.}
  \label{fig:ffhq_examples}
\end{figure}

We demonstrate that our optimal control-based sampler overcomes several practical obstacles that plague existing diffusion-based methods for inverse problem solvers.

\paragraph{Brittleness to Discretization} In a probabilistic framework, solutions to inverse problems incur a discretization error from the numerical solution of Eq. \eqref{eq:dps} that decays poorly with the total diffusion steps $T$ of the diffusion process. While much research has been conducted on the acceleration of unconditional diffusion processes \cite{song2020denoising,jolicoeur2021gotta,karras2022elucidating,meng2023distillation}, sample quality appears to decay much more aggressively in diffusion-based inverse problem solvers (Figure \ref{fig:result_vs_time}). 

We theorize that this is due to two reasons: 1) the posterior sampler Eq. \eqref{eq:dps_discretized} is only correct in the limit of infinitely small time steps, and 2) the quality of the approximated conditional score term $\nabla_\mathbf{x} \log p(\mathbf{y} | \mathbf{x}_t)$ decays quickly with time (Figure \ref{fig:x_0_hat_comparison}), and so fewer timesteps lead to fewer chances at low $t$ to correct errors made at high $t$. On the other hand, since optimal control directly casts the \textbf{discretized} process as an end-to-end control episode, it produces a feasible solution for any number of discretization steps $T$. 

\paragraph{Intractability of $\nabla_{\mathbf{x}_t} \log p(\mathbf{y}|\mathbf{x}_t)$}
When the forward model $\mathcal{A}$ is known and $\eta$ comes from a simple distribution, the conditional likelihood $p(\mathbf{y} | \mathbf{x}_t)$ can be derived in closed form for $t = 0$. On the other hand, the dependence of $y$ on $\mathbf{x}_t$ for $t > 0$ is generally not known without explicitly computing $\mathbf{x}_0$, which requires sampling from the diffusion process. 
Ultimately, obtaining the conditional score term $\nabla_{\mathbf{x}_t} \log p(\mathbf{y} | \mathbf{x}_t)$ is a highly nontrivial task \cite{song2021scorebased}. 

To sidestep this issue, many works \cite{meng2022diffusion,song2022pseudoinverse,chung2023diffusion} factorize this term as the integral
\begin{equation}
    p(\mathbf{y} | \mathbf{x}_t) = \int p(\mathbf{y} | \mathbf{x}_0) p(\mathbf{x}_0|\mathbf{x}_t) d\mathbf{x}_0
\end{equation}
and then apply a series of approximations to recover a computationally feasible estimate of the conditional score. First, the marginal $p(\mathbf{x}_0|\mathbf{x}_t)$ is replaced by the marginal conditioned on $\mathbf{x}_0$, i.e. $p(\mathbf{x}_0|\mathbf{x}_t,\mathbf{x}_0) = \mathcal{N}(\mathbf{x}_0, \sigma^2 \mathbf{I})$ \cite{kim2021noise2score}. Next, the $\mathbf{x}_0$-centered marginal is replaced by the posterior mean $\mathbb{E}[\mathbf{x}_0 | \mathbf{x}_t]$ given by Tweedie's formula \cite{efron2011tweedie}. Finally, the true score is replaced by the learned score network.

While these approximations are necessary in a probabilistic framework, we show that they are not required in our method. Intuitively, this is because the linear quadratic regulator backpropagates the control cost $\log p(\mathbf{y} | \mathbf{x})$ through a forward trajectory rollout, which naturally computes the true conditional score at each time $t$. Moreover, our model always estimates $\mathbf{x}_0 | \mathbf{x}_t$ exactly (up to the discretization error induced by solving Eq. \ref{eq:discretized_reverse_ode}), rather than forming an approximation $\hat{\mathbf{x}}_0 \approx \mathbf{x}_0$ (Figure \ref{fig:x_0_hat_comparison}). We formalize this observation with the following statement.

\begin{restatable}[]{theorem}{opcthm}
\label{thm:opc}
Let Eq. \ref{eq:discretized_reverse_ode} be the discretized sampling equation for the diffusion model with \textbf{output perturbation mode} control (Eq. \ref{eq:output_perturbation}). Moreover, let the terminal cost
\begin{equation}
    \ell_0(\mathbf{x}_0) = -\log p(\mathbf{y} | \mathbf{x}_0)
\end{equation}
be twice-differentiable and the running costs
\begin{equation}
    \ell_t(\mathbf{x}_t, \mathbf{u}_t) = 0.
\end{equation}
Then the iterative linear quadratic regulator with Tikhonov regularizer $\alpha$ produces the control
\begin{equation}
    \label{eq:opc_action}
    \mathbf{u}_t = \alpha \nabla_{\mathbf{x}_t} \log p(\mathbf{y} | \mathbf{x}_0).
\end{equation}
\end{restatable}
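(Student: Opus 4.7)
The plan is to unravel the iLQR backward pass under the simplifications imposed by the hypotheses and show that, after Tikhonov regularization, each step collapses to a chain-rule propagation of $\nabla \log p(\mathbf{y}|\mathbf{x}_0)$ through the uncontrolled diffusion dynamics. I would begin by specializing the output-perturbation column of the $Q$-derivative table. Since $\ell_t(\mathbf{x}_t,\mathbf{u}_t)\equiv 0$ for $t\geq 1$, we have $\ell_\mathbf{u}=0$ and $\ell_{\mathbf{u}\mathbf{u}}=0$, so the recursion simplifies to $Q_\mathbf{u}=V_\mathbf{x}'$, $Q_{\mathbf{u}\mathbf{u}}=V_{\mathbf{x}\mathbf{x}}'$, $Q_{\mathbf{u}\mathbf{x}}=V_{\mathbf{x}\mathbf{x}}'\vh_\mathbf{x}$, and $Q_\mathbf{x}=\vh_\mathbf{x}^T V_\mathbf{x}'$. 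The backward pass is initialized at $t=0$ by the terminal cost, giving $V_\mathbf{x}^{(0)}=-\nabla_{\mathbf{x}_0}\log p(\mathbf{y}|\mathbf{x}_0)$ and $V_{\mathbf{x}\mathbf{x}}^{(0)}=-\nabla_{\mathbf{x}_0}^2\log p(\mathbf{y}|\mathbf{x}_0)$, in agreement with the initialization step in Algorithm \ref{alg:output_perturbation}.

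Next, I would apply Tikhonov regularization by replacing $Q_{\mathbf{u}\mathbf{u}}^{-1}$ with $\alpha \mathbf{I}$ (the convention adopted in the paper's high-dimensional regime of Appendix \ref{sec:hdc}). The feedforward gain then becomes $\mathbf{k}_t=-Q_{\mathbf{u}\mathbf{u}}^{-1}Q_\mathbf{u}=-\alpha V_\mathbf{x}^{(t-1)}$ and the feedback gain $\mathbf{K}_t=-\alpha Q_{\mathbf{u}\mathbf{x}}^{(t)}$. Plugging these into the Riccati equation for $V_\mathbf{x}$ produces a quadratic correction term of order $\alpha$ that drops to leading order, leaving the pure backward transport $V_\mathbf{x}^{(t)}=Q_\mathbf{x}^{(t)}=\vh_\mathbf{x}^T V_\mathbf{x}^{(t-1)}$. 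I would then induct on $t$: assuming $V_\mathbf{x}^{(t-1)}=-\nabla_{\mathbf{x}_{t-1}}\log p(\mathbf{y}|\mathbf{x}_0)$, the chain rule applied to the uncontrolled map $\mathbf{x}_{t-1}=\vh(\mathbf{x}_t)$ immediately gives $V_\mathbf{x}^{(t)}=-\nabla_{\mathbf{x}_t}\log p(\mathbf{y}|\mathbf{x}_0)$, so the backward recursion exactly reverse-mode differentiates the terminal cost along the uncontrolled trajectory.

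Finally, I would invoke the control update in Eq.~\eqref{eq:locally_optimal_action_update}. On the first outer iteration the baseline coincides with the current trajectory ($\bar{\mathbf{x}}_t=\mathbf{x}_t$) and the initial control is zero, so the feedback contribution vanishes and $\mathbf{u}_t=\lambda\mathbf{k}_t=-\lambda\alpha V_\mathbf{x}^{(t-1)}=\lambda\alpha\nabla_{\mathbf{x}_t}\log p(\mathbf{y}|\mathbf{x}_0)$, which (absorbing $\lambda$ into $\alpha$) is exactly Eq.~\eqref{eq:opc_action}. The main obstacle I anticipate is being careful with two bookkeeping issues: pinning down the precise convention under which Tikhonov regularization produces $Q_{\mathbf{u}\mathbf{u}}^{-1}=\alpha\mathbf{I}$ so that the $O(\alpha)$ correction in the Riccati step can be cleanly discarded without accumulating over the $T$ backward sweeps, and aligning the index on $\nabla_{\mathbf{x}_t}$ with the paper's convention for when $\mathbf{u}_t$ enters the dynamics (since in output perturbation $\mathbf{u}_t$ is applied to produce $\mathbf{x}_{t-1}$, the natural gradient sits on the post-perturbation state).
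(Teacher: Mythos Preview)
Your proposal captures the broad outline---specialize the output-perturbation $Q$-derivatives, then push $V_\mathbf{x}$ backward as a chain rule through the uncontrolled dynamics---but it misses the central structural fact the paper exploits, and as a result your argument is approximate where the paper's is exact.

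The key step in the paper's proof is that $V_{\mathbf{x}\mathbf{x}}$ vanishes \emph{identically} after one backward Riccati step. In output-perturbation mode with $\ell_{\mathbf{u}\mathbf{u}}=\mathbf{0}$, one has $Q_{\mathbf{x}\mathbf{x}} = \vh_\mathbf{x}^T V_{\mathbf{x}\mathbf{x}}'\vh_\mathbf{x}$, $Q_{\mathbf{u}\mathbf{x}} = V_{\mathbf{x}\mathbf{x}}'\vh_\mathbf{x}$, and $Q_{\mathbf{u}\mathbf{u}} = V_{\mathbf{x}\mathbf{x}}'$, so the (unregularized) Riccati update
\[
V_{\mathbf{x}\mathbf{x}} \;=\; Q_{\mathbf{x}\mathbf{x}} - Q_{\mathbf{u}\mathbf{x}}^{T} Q_{\mathbf{u}\mathbf{u}}^{-1} Q_{\mathbf{u}\mathbf{x}}
\;=\; \vh_\mathbf{x}^T V_{\mathbf{x}\mathbf{x}}'\vh_\mathbf{x} - \vh_\mathbf{x}^T V_{\mathbf{x}\mathbf{x}}'(V_{\mathbf{x}\mathbf{x}}')^{-1}V_{\mathbf{x}\mathbf{x}}'\vh_\mathbf{x}
\;=\;\mathbf{0}
\]
collapses exactly. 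This in turn forces $Q_{\mathbf{u}\mathbf{u}}=\mathbf{0}$ at every subsequent backward step, so the Tikhonov-regularized inverse $(Q_{\mathbf{u}\mathbf{u}}+\alpha\mathbf{I})^{-1}$ is \emph{exactly} $\tfrac{1}{\alpha}\mathbf{I}$, the feedback gain $\mathbf{K}$ is \emph{exactly} $\mathbf{0}$, and the first-order recursion $V_\mathbf{x}=\vh_\mathbf{x}^T V_\mathbf{x}'$ holds with no residual. Nothing is dropped.

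Your route---set $Q_{\mathbf{u}\mathbf{u}}^{-1}=\alpha\mathbf{I}$ directly and then discard ``a quadratic correction term of order $\alpha$''---does not deliver the theorem as stated. The control you are trying to produce is itself of order $\alpha$, so neglecting $O(\alpha)$ terms in the $V_\mathbf{x}$ recursion is not obviously safe; as you yourself flag, such terms may accumulate across the $T$ backward sweeps. The paper sidesteps this entirely through the exact second-order cancellation above, which is the one idea your proposal is missing. (A secondary bookkeeping point: the standard Tikhonov convention, and the one the paper uses, is $(Q_{\mathbf{u}\mathbf{u}}+\alpha\mathbf{I})^{-1}$, not $\alpha\mathbf{I}$; combined with the exact $Q_{\mathbf{u}\mathbf{u}}=\mathbf{0}$ this yields $\mathbf{k}=-\tfrac{1}{\alpha}V_\mathbf{x}'$ and hence $\mathbf{u}_t=\tfrac{1}{\alpha}\nabla_{\mathbf{x}_t}\log p(\mathbf{y}|\mathbf{x}_0)$, which is what the paper's proof actually derives and what Lemma~\ref{thm:opc_is_dps} subsequently uses.)
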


In other words, by framing the inverse problem as an unconditional diffusion process with controls $\mathbf{u}_t$, our proposed method produces controls that coincide precisely with the desired conditional scores $\nabla_{\mathbf{x}_t} \log p(\mathbf{y} | \mathbf{x}_0)$. 

Let us further assume that $\log p(\mathbf{y} | \mathbf{x}_t) = \log p(\mathbf{y} | \mathbf{x}_0)$, i.e., $\mathbf{x}_t$ contains no additional information about $y$ than $\mathbf{x}_0$. This assumption results in the posterior mean approximation in \cite{chung2023diffusion} under stochastic dynamics (Eq. \ref{eq:reverse_sde}), where we additionally obtain \textit{exact} computation of $\mathbf{x}_0$, rather than $\hat{\mathbf{x}}_0 \approx \mathbf{x}_0$ via Tweedie's formula \cite{kim2021noise2score}. Under the deterministic ODE dynamics (Eq. \ref{eq:reverse_ode}), we recover the \textbf{true posterior sampler} under appropriate choice of Tikhonov regularization constant $\alpha$.

\begin{restatable}[]{lemma}{opcdps}
\label{thm:opc_is_dps}
Under the deterministic sampler with \textbf{output perturbation mode} control, $\alpha = \frac{1}{g(t)^2 \Delta t}$ recovers posterior sampling (Eq. \ref{eq:dps_discretized}).
\end{restatable}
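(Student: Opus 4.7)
The plan is to combine Theorem~\ref{thm:opc} with the independence assumption $\log p(\mathbf{y}|\mathbf{x}_t) = \log p(\mathbf{y}|\mathbf{x}_0)$ and then reduce the controlled update to the discretized posterior-sampling update by matching coefficients. Since Theorem~\ref{thm:opc} already did the heavy lifting (backpropagating $V_\mathbf{x}, V_{\mathbf{xx}}$ through the Riccati recursion to obtain the closed-form control), what remains is essentially an algebraic identification.

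First, I would import the conclusion of Theorem~\ref{thm:opc}: under the stated terminal and running cost choices and output-perturbation dynamics, the iLQR action takes the form
\begin{equation*}
\mathbf{u}_t \;=\; \alpha\,\nabla_{\mathbf{x}_t} \log p(\mathbf{y}\mid \mathbf{x}_0).
\end{equation*}
Invoking the assumption stated immediately before the lemma, $\log p(\mathbf{y}\mid \mathbf{x}_t) = \log p(\mathbf{y}\mid \mathbf{x}_0)$, this simplifies to $\mathbf{u}_t = \alpha\,\nabla_{\mathbf{x}_t}\log p(\mathbf{y}\mid \mathbf{x}_t)$. I would then substitute this $\mathbf{u}_t$ into the output-perturbation update \eqref{eq:output_perturbation} to obtain a controlled dynamics whose only deviation from the uncontrolled reverse-ODE step is an additive $\alpha\,\nabla_{\mathbf{x}_t}\log p(\mathbf{y}\mid \mathbf{x}_t)$ term.

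Next, I would expand the right-hand side of the discretized posterior sampler \eqref{eq:dps_discretized} via Bayes' rule, splitting $\nabla_{\mathbf{x}_t}\log p_t(\mathbf{x}_t\mid \mathbf{y}) = \nabla_{\mathbf{x}_t}\log p_t(\mathbf{x}_t) + \nabla_{\mathbf{x}_t}\log p_t(\mathbf{y}\mid \mathbf{x}_t)$. After cancellation of the prior-score pieces that appear identically in the uncontrolled reverse dynamics \eqref{eq:discretized_reverse_ode}, the DPS update reduces to the uncontrolled step plus an additive term of the form $\tfrac{1}{2} g(t)^2 \Delta t\,\nabla_{\mathbf{x}_t}\log p_t(\mathbf{y}\mid \mathbf{x}_t)$ (modulo the obvious sign conventions between \eqref{eq:discretized_reverse_ode} and \eqref{eq:dps_discretized}).

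Finally, I would equate the two additive correction terms and solve for the Tikhonov constant. Matching $\alpha\,\nabla_{\mathbf{x}_t}\log p(\mathbf{y}\mid\mathbf{x}_t)$ against the DPS correction yields the stated coefficient for $\alpha$ (up to the factor-of-two/sign conventions used in the paper's discretization of the probability-flow ODE), recovering \eqref{eq:dps_discretized} exactly. The main obstacle I anticipate is bookkeeping rather than conceptual: one must keep the sign convention for reverse-time drift consistent between Eqs.~\eqref{eq:discretized_reverse_ode} and \eqref{eq:dps_discretized}, and verify that under the deterministic (ODE) sampler the $\tfrac{1}{2}g(t)^2$ factor coming from the probability-flow formulation — rather than the $g(t)^2$ factor from the SDE — is what appears in the correction, so that the inverse relationship between $\alpha$ and $g(t)^2\Delta t$ is correctly identified.
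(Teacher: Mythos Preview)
Your proposal is correct and follows essentially the same route as the paper: substitute the control from Theorem~\ref{thm:opc} into the output-perturbation update, identify $\log p(\mathbf{y}\mid\mathbf{x}_0)$ with $\log p(\mathbf{y}\mid\mathbf{x}_t)$ under the deterministic sampler, and match coefficients against Eq.~\eqref{eq:dps_discretized}. The only cosmetic difference is that the paper's proof \emph{derives} the identification $\log p_t(\mathbf{y}\mid\mathbf{x}_0)=\log p_t(\mathbf{y}\mid\mathbf{x}_t)$ inside the proof (arguing that each $\mathbf{x}_t$ has a unique ODE path to $\mathbf{x}_0$), whereas you import it as the assumption stated just before the lemma; your caution about the $\tfrac{1}{2}$ and sign bookkeeping is well placed, as the paper's own equations are somewhat loose on exactly this point.
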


We demonstrate a similar result with \textbf{input mode perturbation}.

\begin{restatable}[]{theorem}{ipcthm}
    \label{thm:ipc}
    Let Eq. \ref{eq:discretized_reverse_ode} be the discretized sampling equation for the diffusion model with \textbf{input perturbation mode} control (Eq. \ref{eq:input_perturbation}). Moreover, let
    \begin{equation}
        \ell_0(\mathbf{x}_0) = \log p(\mathbf{y} | \mathbf{x}_0),
    \end{equation}
    and the running costs
    \begin{equation}
        \ell_t(\mathbf{x}_t, \mathbf{u}_t) = 0.
    \end{equation}
    Then the iterative linear quadratic regulator with Tikhonov regularizer $\alpha = \frac{1}{g(t)^2 \Delta t}$ produces the dynamical sytem
    \begin{align*}
        \widetilde{\mathbf{x}}_t 
        &= \widetilde{\mathbf{x}}_t + [f(\widetilde{\mathbf{x}}_t) - \frac{1}{2} g(t)^2 (\nabla_{\mathbf{x}} \log p_t(\widetilde{\mathbf{x}}_t) \\
        &\hspace{1.4in} + \nabla_{\mathbf{x}} \log p_t(\mathbf{y} | \mathbf{x}_t))]\Delta t,
        \numberthis
        \label{eq:ipc_update}
    \end{align*}
    where $\widetilde{\mathbf{x}}_t := \mathbf{x}_t + \mathbf{u}_t$.
\end{restatable}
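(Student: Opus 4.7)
The plan is to parallel the derivations of Theorem \ref{thm:opc} and Lemma \ref{thm:opc_is_dps}, adapted to the input-perturbation setting. Whereas output perturbation applies $\vu_t$ \emph{after} the diffusion step (so $\vh_\vu = \vI$), input perturbation applies it \emph{before} (so $\vh_\vu = \vh_\vx$). The iLQR control therefore shifts the argument of both the drift $\vf$ and the score $\nabla \log p_t$ in Eq.~\eqref{eq:input_perturbation}. Once $\vu_t$ is derived, the change of variables $\widetilde{\vx}_t := \vx_t + \vu_t$ absorbs the shift into the dynamics and reorganizes the update into posterior-sampling form.

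I would proceed in three steps. First, specialize the left column of $Q$-derivatives using $\ell_\vu = 0$ and $\ell_{\vu\vu} = 0$ to obtain $Q_\vu = \vh_\vx^T V_\vx'$ and $Q_{\vu\vu} = \vh_\vx^T V_{\vx\vx}' \vh_\vx$. Second, by backward induction on $t$ starting from the terminal condition $V_\vx = \nabla_{\vx_0}\log p(\vy|\vx_0)$ and using the chain-rule identity $\nabla_{\vx_t}(\cdot) = \vh_\vx^T \nabla_{\vx_{t-1}}(\cdot)$ implied by $\vx_{t-1} = \vh(\vx_t)$, show that $V_\vx$ at time $t$ equals $\nabla_{\vx_t}\log p(\vy|\vx_0)$. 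Then, just as in the proof of Theorem \ref{thm:opc}, the Tikhonov-regularized feedforward gain $\vk_t = -(Q_{\vu\vu} + \alpha^{-1}\vI)^{-1} Q_\vu$ with $\alpha = 1/(g(t)^2 \Delta t)$ collapses to $\vu_t = \tfrac{1}{2} g(t)^2 \Delta t \cdot \nabla_{\vx_t}\log p(\vy|\vx_0)$. Third, substitute this $\vu_t$ into Eq.~\eqref{eq:input_perturbation}, introduce $\widetilde{\vx}_t := \vx_t + \vu_t$, and match the resulting update against Eq.~\eqref{eq:ipc_update}, identifying $\nabla_{\vx_t}\log p(\vy|\vx_0)$ with $\nabla_\vx \log p_t(\vy|\vx_t)$ under the assumption $\log p(\vy|\vx_t) = \log p(\vy|\vx_0)$ introduced just before the statement.

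The main obstacle is the algebraic bookkeeping around the $\vh_\vx^T(\cdot)\vh_\vx$ sandwich structure specific to input perturbation. Unlike the output-perturbation case, where $Q_{\vu\vu}$ and $Q_\vu$ already reduce to $V_{\vx\vx}'$ and $V_\vx'$, here both are conjugated by $\vh_\vx$, so one copy of $\vh_\vx$ must be canceled by the regularized inverse (the $\alpha^{-1}\vI$ term dominating $\vh_\vx^T V_{\vx\vx}' \vh_\vx$) while the other combines with $V_\vx'$ through the chain rule to yield the conditional score evaluated at $\vx_t$. A secondary subtlety is the opposite sign convention $\ell_0 = +\log p$ (versus $-\log p$ in Theorem \ref{thm:opc}); this choice ensures that after substitution and change of variables the conditional-score term appears with the correct sign inside the bracketed drift of Eq.~\eqref{eq:ipc_update}.
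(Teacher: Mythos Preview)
Your plan is essentially the paper's own argument: specialize the $Q$-derivatives under $\vh_\vu=\vh_\vx$, propagate $V_\vx$ backward by the chain rule so that $V_\vx^{(t)}=\nabla_{\vx_t}\log p(\vy\mid\vx_0)$, read off $\vk_t$ from the Tikhonov-regularized inverse, and then substitute into Eq.~\eqref{eq:input_perturbation} with the change of variables $\widetilde{\vx}_t=\vx_t+\vu_t$.

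The one place where your sketch diverges from the paper is the handling of the second-order term. You describe the $\alpha^{-1}\vI$ regularizer as \emph{dominating} $\vh_\vx^T V_{\vx\vx}'\vh_\vx$, which reads as an asymptotic or approximate step. The paper instead shows that $V_{\vx\vx}$ vanishes \emph{exactly} at every backward step via the Riccati update: under input perturbation with $\ell_{\vu\vu}=0$ one has $Q_{\vx\vx}=Q_{\vu\vx}=Q_{\vu\vu}$, so $V_{\vx\vx}=Q_{\vx\vx}-Q_{\vu\vx}^{T}Q_{\vu\vu}^{-1}Q_{\vu\vx}=\mathbf{0}$. That algebraic cancellation is what lets the regularized inverse collapse to a scalar multiple of the identity exactly, with no ``one copy of $\vh_\vx$ to cancel''. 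If you carry out your version as written you will get the right leading term but will be tempted to phrase it as an approximation; adopting the paper's $V_{\vx\vx}=0$ observation removes that ambiguity and also immediately gives $\vK=\mathbf{0}$, so only the feedforward gain survives. Your remaining bookkeeping (the $\alpha$ vs.\ $\alpha^{-1}$ convention, the sign of $\ell_0$, and the identification $\log p(\vy\mid\vx_t)=\log p(\vy\mid\vx_0)$) is handled correctly.
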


Observe that Eq. \eqref{eq:ipc_update} can be understood as a predictor-corrector sampling method, where the predictor produces an unconditional reverse diffusion update and the corrector produces a conditional correction step on the intermediary variable $\mathbf{x}_t = \tilde{\mathbf{x}}_t - \mathbf{u}_t$.

Ultimately, these results demonstrate that our proposed method is able to recover the idealized sampling procedure under mild assumptions on the diffusion optimal control algorithm. 

\paragraph{Dependence on the Approximate Score}
While our theoretical results require that the learned score function $s_\theta(\mathbf{x}_t, t)$ approximates the true data score $\log p_t(\mathbf{x}_t, t)$, we emphasize that the performance of our method does not necessitate this condition. In fact, we find that reconstruction performance is theoretically and empirically robust to the accuracy of the approximated prior score $s_\theta(\mathbf{x}_t, t) \approx \nabla_{\mathbf{x}_t} \log p_t(\mathbf{x}_t)$ or conditional score $\nabla_{\mathbf{x}_t} \log p_t(\mathbf{y} | \mathbf{x}_0) \approx \nabla_{\mathbf{x}_t} \log p_t(\mathbf{y} | \mathbf{x}_t)$ terms. This is because the optimal control-based solution is formulated for the optimization of generalized dynamical systems, and thus agnostic to the diffusion sampling process. 

Certainly, improved approximation of the score terms result in a better-informed prior and usually higher sample quality. However, we demonstrate that our sampler produces remarkably reasonable solutions even in the case of randomly initialized diffusion models. Conversely, probabilistic posterior samplers can only sample from $p(\mathbf{y} | \mathbf{x}_0)$ when the terms composing the posterior sampling equation (Eq. \eqref{eq:dps}) are well approximated (Figure \ref{fig:random_init}). Modeling errors can occur even in foundation models. For example, this scenario may arise in models trained on regions where there are underrepresented examples in the data. When these arise from existing social or ethical biases, they can further perpetuate or amplify biases to the resulting model if left unaddressed\cite{bolukbasi2016man,birhane2021multimodal,srivastava2022beyond}.

There exist several methods that seek to alleviate the errors incurred by Tweedie's formula (being a mean approximation of the diffusion process), including \cite{song2024solving} which imposes a hard data consistency optimization loop at various points in the diffusion process, and \cite{rout2023beyond} which includes a stochastic averaging loop in each step of the diffusion process. However, these methods still rely on Tweedie's formula for the error reduction scheme, which assumes access to a ground truth score function. Ultimately, the aforementioned problems in the present section are exacerbated in existing samplers, and relatively less consequential in our solver.

\begin{table*}
\centering
\begin{adjustbox}{max width=\textwidth}
\begin{tabular}{@{}ccccccccccc@{}}
    \toprule
    & \multicolumn{2}{c}{SR $\times 4$} & \multicolumn{2}{c}{Random Inpainting} & \multicolumn{2}{c}{Box Inpainting} & \multicolumn{2}{c}{Gaussian Deblurring} & \multicolumn{2}{c}{Motion Deblurring}\\
    \cmidrule(r){2-3}\cmidrule(lr){4-5}\cmidrule(l){6-7}\cmidrule(l){8-9}\cmidrule(l){10-11}
    & FID $\downarrow$ & LPIPS $\downarrow$ & FID $\downarrow$ & LPIPS $\downarrow$ & FID $\downarrow$ & LPIPS $\downarrow$ & FID $\downarrow$ & LPIPS $\downarrow$ & FID $\downarrow$ & LPIPS $\downarrow$\\
    \midrule
    Ours (NFE = 2500) & \textbf{32.47} & \textbf{0.171} & \textbf{15.93} & \textbf{0.053} & \textbf{20.22} & \textbf{0.122} & \textbf{31.80} & \textbf{0.189} & \textbf{39.40} & \textbf{0.217} \\
    Ours (NFE = 1000) & 37.53 & 0.189 & 20.75 & 0.108 & 23.88 & 0.164 & 35.24 & 0.191 & 45.99 & 0.233 \\
    \midrule
    PSLD (NFE = 1000) & 34.28 & 0.201 & 21.34 & 0.096 & 43.11 & 0.167 & 41.53 & 0.221 & - & - \\
    Flash-Diffusion* (NFE = \textit{varies}) & - & - & 53.95 & 0.195 & - & - & 65.35 & 0.280 & 64.57 & 0.267 \\
    DDNM (NFE = 1000) & 68.94 & 0.328 & 105.3 & 0.802 & 72.28 & 0.483 & 126.0 & 0.995 & - & - \\
    DPS (NFE = 1000) & 39.35 & 0.214 & 33.12 & 0.168 & 21.19 & 0.212 & 44.05 & 0.257 & 39.92 & 0.242 \\
    DDRM (NFE = 1000) & 62.15 & 0.294 & 42.93 & 0.204 & 69.71 & 0.587 & 74.92 & 0.332 & - & - \\
    MCG (NFE = 1000) & 87.64 & 0.520 & 40.11 & 0.309 & 29.26 & 0.286 & 101.2 & 0.340 & 310.5 & 0.702\\
    PNP-ADMM & 66.52 & 0.353 & 151.9 & 0.406 & 123.6 & 0.692 & 90.42 & 0.441 & 89.08 & 0.405 \\
    Score-SDE (NFE = 1000) & 96.72 & 0.563 & 60.06 & 0.331 & 76.54 & 0.612 & 109.0 & 0.403 & 292.2 & 0.657 \\
    ADMM-TV & 110.6 & 0.428 & 68.94 & 0.322 & 181.5 & 0.463 & 186.7 & 0.507 & 152.3 & 0.508 \\
    \bottomrule
\end{tabular}
\end{adjustbox}
\caption{Quantitative evaluation (FID, LPIPS) of model performance on inverse problems on the FFHQ $256$x$256$-1K dataset.}
\label{table:ffhq}
\end{table*}

\section{Related Work}

The recent success of diffusion models in image generation \cite{song2019generative,ho2020denoising,song2021scorebased,rombach2022high} has spawned a surge of research in deep learning-based solvers to inverse problems. \cite{song2021scorebased} demonstrated a strategy for provably sampling from the solution set $p(\mathbf{x} | \mathbf{y})$ of a general inverse problem $\mathbf{y} = \mathcal{A}(\mathbf{x})$ using only an unconditional prior score model $\nabla_\mathbf{x} \log p_t(\mathbf{x})$ and a forward probabilistic model $\log p(\mathbf{y} | \mathbf{x}_t)$. However, a crucial problem arises in the intractability of forward probabilistic model, which depends on the noisy $\mathbf{x}_t$ rather than the final $\mathbf{x}_0$. This has resulted in a series of approximation algorithms \cite{choi2021ilvr,kawar2022denoising,chung2022improving,chung2023diffusion,chung2023direct,kawar2023gsure} for the true conditional diffusion dynamics.

Topics in control theory have been applied to deep learning \cite{liu2020ddpnopt,pereira2020feynman} as well as diffusion modeling \cite{berner2022optimal}. Optimal control can also be connected to diffusion processes via forward-backward SDEs \cite{chen2021likelihood}. However, these ideas have not been applied to guided conditional diffusion processes solely at inference time, nor for guided conditional sampling. Our proposed optimal control-based algorithm is, to our knowledge, the first such framework for deep inverse problem solvers.

\section{Experiments}
\label{sec:experiments}

\begin{figure}
\centering
\begin{minipage}{.45\textwidth}
  \centering
    \begin{subfigure}
        \centering
        \includegraphics[width=0.475\linewidth]{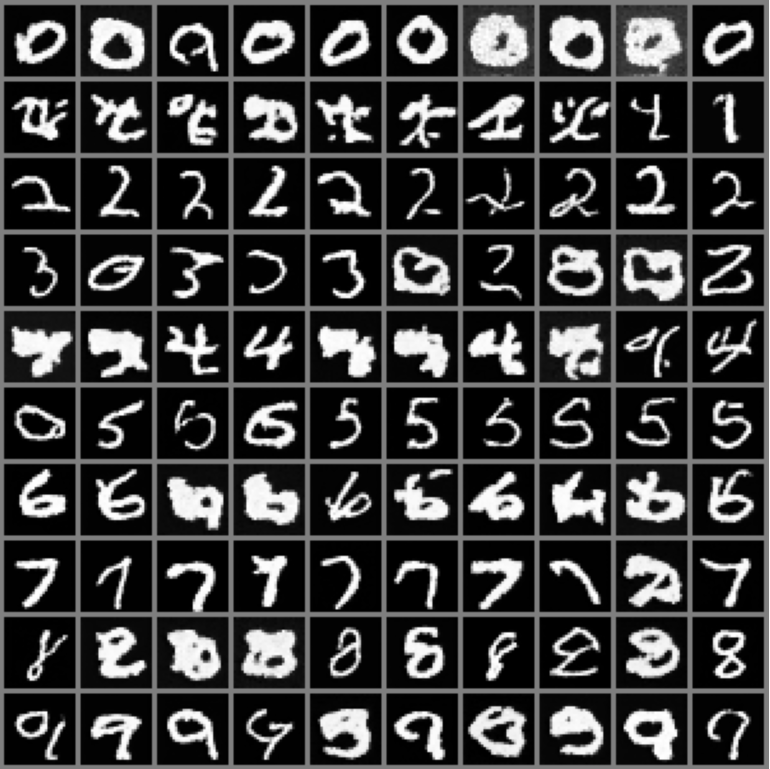}%
        \hfill
        \includegraphics[width=0.475\linewidth]{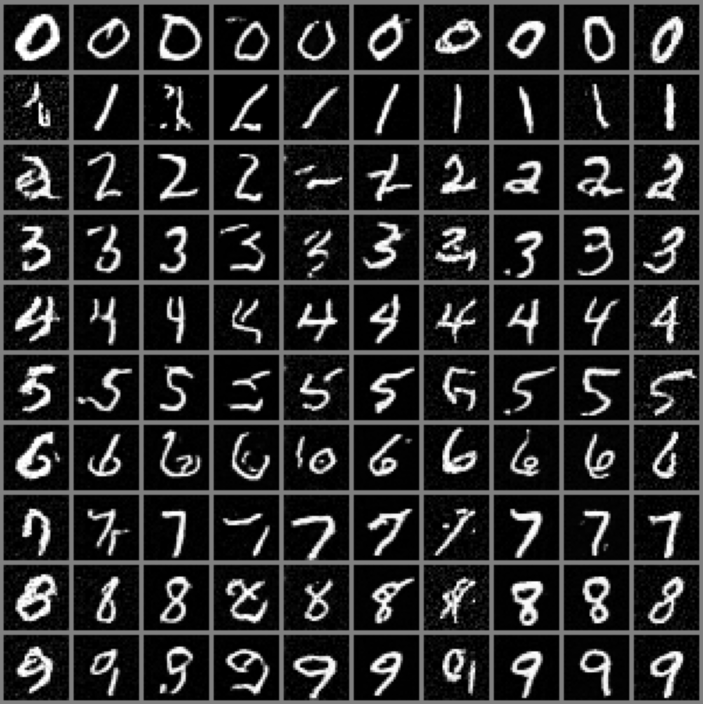}
        \caption{\textbf{Examples from the class-conditional inverse problem.} DPS (left) is compared against ours (right). Each row is a different target MNIST class.}
        \label{fig:mnist}
    \end{subfigure}
\end{minipage}%
\hspace{.2in}
\begin{minipage}{.45\textwidth}
  \centering
    \includegraphics[width=\linewidth]{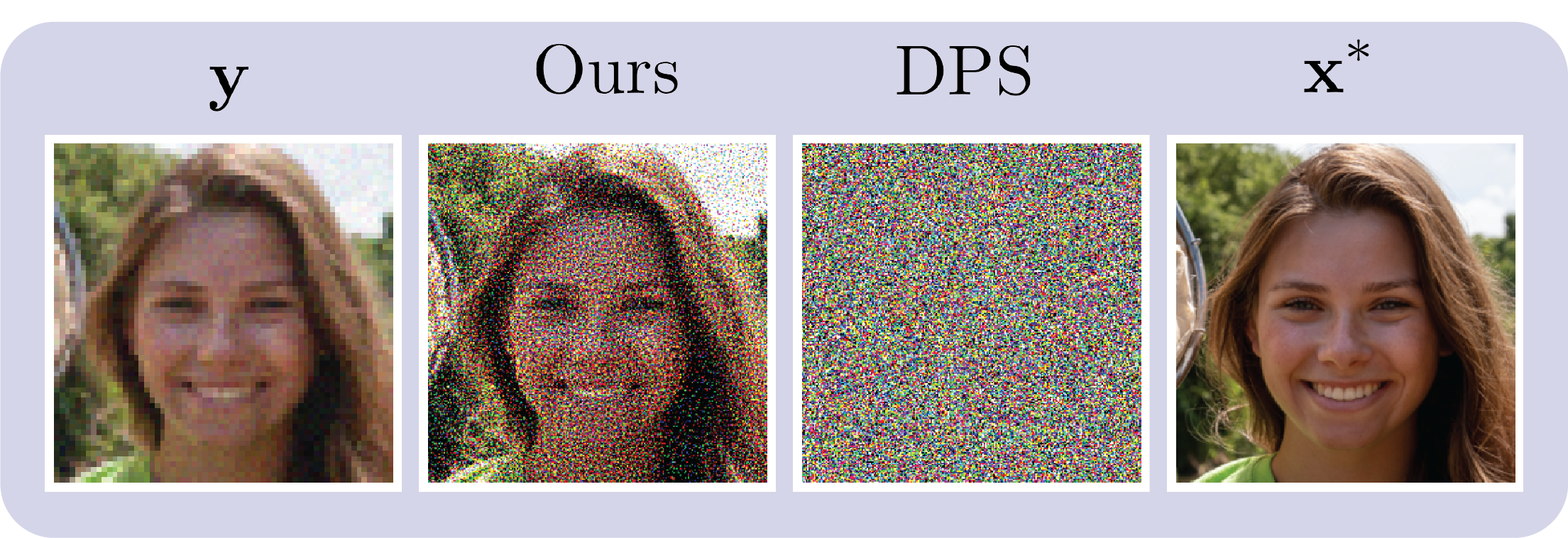}
    \caption{\textbf{Robustness to approximation quality of the score function.} We consider the $4 \times$ super-resolution task with a \textit{randomly initialized} diffusion model. Since the reverse diffusion process is no longer well approximated, DPS cannot produce a feasible solution, while our method still can.}
    \label{fig:random_init}
\end{minipage}
\end{figure}


Following previous work \cite{chung2023diffusion, meng2022diffusion,kawar2022denoising}, we consider five inverse problems. 1) In $4 \times$ image super-resolution, we use the bicubic downsampling operator. 2) In randomized inpainting, we uniformly omit 92\% of all pixels (across all channels). 3) In box inpainting, we mask out a $128 \times 128$ block uniformly sampled from a 16 pixel margin from each side of the image, as in \cite{chung2022improving}. 4) In Gaussian deblurring, we use a kernel of size $61 \times 61$ and standard deviation $3.0$. In motion deblurring, we generate images according to a library\footnote{\url{https://github.com/LeviBorodenko/motionblur}} of point spread functions with kernel size $61 \times 61$ and intensity $0.5$. Following the experimental design in \cite{chung2023diffusion}, we apply Gaussian noise with standard deviation $0.05$ to all measurements of the forward model. 


We compare against a generalized diffusion inverse sampler (Score-SDE) proposed in \cite{song2021scorebased}, Diffusion Posterior Sampling (DPS) \cite{chung2023diffusion}, Denoising Diffusion Restoration Models \cite{kawar2022denoising}, Manifold Constrained Gradients (MCG) \cite{chung2022improving}, as well as two recent latent diffusion-based methods \cite{fabian2023adapt} (Flash-Diffusion\footnote{For Gaussian blur and random inpainting, Flash-Diffusion uses randomly sampled, but less severe degradation operators than our experimental setup.}) and \cite{rout2024solving} (PSLD). For non-diffusion baselines, we compare against Plug-and-Play Alternating Direction Method of Multipliers (PnP-ADMM) with neural proximal maps \cite{chan2016plug, zhang2017beyond}, and a total-variation based alternating direction method of multipliers (TV-ADMM) baseline proposed in \cite{chung2023diffusion}.

We validate our results on the high resolution human face dataset FFHQ $256 \times 256$ \cite{karras2019style}. Several methods are model agnostic (DPS, DDRM, MCG, and thus evaluated with the same pre-trained diffusion models. To fairly compare between all models, all methods use the model weights from \cite{chung2023diffusion}, which are trained on 49K FFHQ images, with 1K images left as a held-out set for evaluation. We compare our algorithm against competing frameworks on these last 1K images. We report our results on FFHQ $256 \times 256$ in Table \ref{table:ffhq}, and demonstrate improvements on all tasks against previous methods. Finally, we demonstrate the performance of our algorithm on the nonlinear inverse problem of class-conditional generation. Namely, let $\mathcal{A}(\mathbf{x}) = \texttt{classifier}(\mathbf{x})$ and $p(\mathbf{y}|\mathbf{x})$ be its associated probability. We compare our method to DPS on the inverse task of generating an MNIST digit given a label $\mathbf{y}$. Compared to images generated by DPS, images from our method exhibit more pronounced class alignment and higher overall sample quality (Figure \ref{fig:mnist}).

\section{Conclusion}
In this paper we presented a novel perspective on tackling inverse problems with diffusion models -- framing the discretized reverse diffusion process as a discrete time optimal control episode. We demonstrate that this framework alleviates several core problems in probabilistic solvers: its dependence on the approximation quality of the underlying terms in the diffusion process, its sensitivity to the temporal discretization scheme, its inherent inaccuracy due to the intractability of the conditional score function. We also show that the diffusion posterior sampler can be seen as a specific case of our optimal control-based sampler. Finally, leveraging the improvements granted by our solver, we validate the performance of our algorithm on several inverse problem tasks across several datasets, and demonstrate highly competitive results.


\bibliography{main.bib}

\begin{thebibliography}{51}
\providecommand{\natexlab}[1]{#1}
\providecommand{\url}[1]{\texttt{#1}}
\expandafter\ifx\csname urlstyle\endcsname\relax
  \providecommand{\doi}[1]{doi: #1}\else
  \providecommand{\doi}{doi: \begingroup \urlstyle{rm}\Url}\fi

\bibitem[Berner et~al.(2022)Berner, Richter, and Ullrich]{berner2022optimal}
Julius Berner, Lorenz Richter, and Karen Ullrich.
\newblock An optimal control perspective on diffusion-based generative modeling.
\newblock \emph{arXiv preprint arXiv:2211.01364}, 2022.

\bibitem[Betts(1998)]{betts1998survey}
John~T Betts.
\newblock Survey of numerical methods for trajectory optimization.
\newblock \emph{Journal of guidance, control, and dynamics}, 21\penalty0 (2):\penalty0 193--207, 1998.

\bibitem[Birhane et~al.(2021)Birhane, Prabhu, and Kahembwe]{birhane2021multimodal}
Abeba Birhane, Vinay~Uday Prabhu, and Emmanuel Kahembwe.
\newblock Multimodal datasets: misogyny, pornography, and malignant stereotypes.
\newblock \emph{arXiv preprint arXiv:2110.01963}, 2021.

\bibitem[Bolukbasi et~al.(2016)Bolukbasi, Chang, Zou, Saligrama, and Kalai]{bolukbasi2016man}
Tolga Bolukbasi, Kai-Wei Chang, James~Y Zou, Venkatesh Saligrama, and Adam~T Kalai.
\newblock Man is to computer programmer as woman is to homemaker? debiasing word embeddings.
\newblock \emph{Advances in neural information processing systems}, 29, 2016.

\bibitem[Chan et~al.(2016)Chan, Wang, and Elgendy]{chan2016plug}
Stanley~H Chan, Xiran Wang, and Omar~A Elgendy.
\newblock Plug-and-play admm for image restoration: Fixed-point convergence and applications.
\newblock \emph{IEEE Transactions on Computational Imaging}, 3\penalty0 (1):\penalty0 84--98, 2016.

\bibitem[Chen et~al.(2021)Chen, Liu, and Theodorou]{chen2021likelihood}
Tianrong Chen, Guan-Horng Liu, and Evangelos~A Theodorou.
\newblock Likelihood training of schr$\backslash$" odinger bridge using forward-backward sdes theory.
\newblock \emph{arXiv preprint arXiv:2110.11291}, 2021.

\bibitem[Choi et~al.(2021)Choi, Kim, Jeong, Gwon, and Yoon]{choi2021ilvr}
Jooyoung Choi, Sungwon Kim, Yonghyun Jeong, Youngjune Gwon, and Sungroh Yoon.
\newblock Ilvr: Conditioning method for denoising diffusion probabilistic models.
\newblock \emph{arXiv preprint arXiv:2108.02938}, 2021.

\bibitem[Chung et~al.(2022)Chung, Sim, Ryu, and Ye]{chung2022improving}
Hyungjin Chung, Byeongsu Sim, Dohoon Ryu, and Jong~Chul Ye.
\newblock Improving diffusion models for inverse problems using manifold constraints.
\newblock \emph{Advances in Neural Information Processing Systems}, 35:\penalty0 25683--25696, 2022.

\bibitem[Chung et~al.(2023{\natexlab{a}})Chung, Kim, Mccann, Klasky, and Ye]{chung2023diffusion}
Hyungjin Chung, Jeongsol Kim, Michael~T Mccann, Marc~L Klasky, and Jong~Chul Ye.
\newblock Diffusion posterior sampling for general noisy inverse problems.
\newblock \emph{International Conference on Learning Representations}, 2023{\natexlab{a}}.

\bibitem[Chung et~al.(2023{\natexlab{b}})Chung, Kim, and Ye]{chung2023direct}
Hyungjin Chung, Jeongsol Kim, and Jong~Chul Ye.
\newblock Direct diffusion bridge using data consistency for inverse problems.
\newblock \emph{arXiv preprint arXiv:2305.19809}, 2023{\natexlab{b}}.

\bibitem[Dhariwal and Nichol(2021)]{dhariwal2021diffusion}
Prafulla Dhariwal and Alexander Nichol.
\newblock Diffusion models beat gans on image synthesis.
\newblock \emph{Advances in neural information processing systems}, 34:\penalty0 8780--8794, 2021.

\bibitem[Efron(2011)]{efron2011tweedie}
Bradley Efron.
\newblock Tweedie’s formula and selection bias.
\newblock \emph{Journal of the American Statistical Association}, 106\penalty0 (496):\penalty0 1602--1614, 2011.

\bibitem[Fabian et~al.(2023)Fabian, Tinaz, and Soltanolkotabi]{fabian2023adapt}
Zalan Fabian, Berk Tinaz, and Mahdi Soltanolkotabi.
\newblock Adapt and diffuse: Sample-adaptive reconstruction via latent diffusion models.
\newblock \emph{arXiv preprint arXiv:2309.06642}, 2023.

\bibitem[Halko et~al.(2011)Halko, Martinsson, and Tropp]{halko2011finding}
Nathan Halko, Per-Gunnar Martinsson, and Joel~A Tropp.
\newblock Finding structure with randomness: Probabilistic algorithms for constructing approximate matrix decompositions.
\newblock \emph{SIAM review}, 53\penalty0 (2):\penalty0 217--288, 2011.

\bibitem[Ho and Salimans(2022)]{ho2022classifier}
Jonathan Ho and Tim Salimans.
\newblock Classifier-free diffusion guidance.
\newblock \emph{arXiv preprint arXiv:2207.12598}, 2022.

\bibitem[Ho et~al.(2020)Ho, Jain, and Abbeel]{ho2020denoising}
Jonathan Ho, Ajay Jain, and Pieter Abbeel.
\newblock Denoising diffusion probabilistic models.
\newblock \emph{Advances in neural information processing systems}, 33:\penalty0 6840--6851, 2020.

\bibitem[Houghton et~al.(2022)Houghton, Oshin, Acheson, Theodorou, and Gregory]{houghton2022path}
Matthew~D Houghton, Alexander~B Oshin, Michael~J Acheson, Evangelos~A Theodorou, and Irene~M Gregory.
\newblock Path planning: Differential dynamic programming and model predictive path integral control on vtol aircraft.
\newblock In \emph{AIAA SCITECH 2022 Forum}, page 0624, 2022.

\bibitem[Jacobson(1968)]{jacobson1968new}
David~H Jacobson.
\newblock New second-order and first-order algorithms for determining optimal control: A differential dynamic programming approach.
\newblock \emph{Journal of Optimization Theory and Applications}, 2:\penalty0 411--440, 1968.

\bibitem[Jolicoeur-Martineau et~al.(2021)Jolicoeur-Martineau, Li, Pich{\'e}-Taillefer, Kachman, and Mitliagkas]{jolicoeur2021gotta}
Alexia Jolicoeur-Martineau, Ke~Li, R{\'e}mi Pich{\'e}-Taillefer, Tal Kachman, and Ioannis Mitliagkas.
\newblock Gotta go fast when generating data with score-based models.
\newblock \emph{arXiv preprint arXiv:2105.14080}, 2021.

\bibitem[Karras et~al.(2019)Karras, Laine, and Aila]{karras2019style}
Tero Karras, Samuli Laine, and Timo Aila.
\newblock A style-based generator architecture for generative adversarial networks.
\newblock In \emph{Proceedings of the IEEE/CVF conference on computer vision and pattern recognition}, pages 4401--4410, 2019.

\bibitem[Karras et~al.(2022)Karras, Aittala, Aila, and Laine]{karras2022elucidating}
Tero Karras, Miika Aittala, Timo Aila, and Samuli Laine.
\newblock Elucidating the design space of diffusion-based generative models.
\newblock \emph{Advances in Neural Information Processing Systems}, 35:\penalty0 26565--26577, 2022.

\bibitem[Kawar et~al.(2022)Kawar, Elad, Ermon, and Song]{kawar2022denoising}
Bahjat Kawar, Michael Elad, Stefano Ermon, and Jiaming Song.
\newblock Denoising diffusion restoration models.
\newblock \emph{Advances in Neural Information Processing Systems}, 35:\penalty0 23593--23606, 2022.

\bibitem[Kawar et~al.(2023)Kawar, Elata, Michaeli, and Elad]{kawar2023gsure}
Bahjat Kawar, Noam Elata, Tomer Michaeli, and Michael Elad.
\newblock Gsure-based diffusion model training with corrupted data.
\newblock \emph{arXiv preprint arXiv:2305.13128}, 2023.

\bibitem[Kim and Ye(2021)]{kim2021noise2score}
Kwanyoung Kim and Jong~Chul Ye.
\newblock Noise2score: tweedie’s approach to self-supervised image denoising without clean images.
\newblock \emph{Advances in Neural Information Processing Systems}, 34:\penalty0 864--874, 2021.

\bibitem[Kingma and Ba(2014)]{kingma2014adam}
Diederik~P Kingma and Jimmy Ba.
\newblock Adam: A method for stochastic optimization.
\newblock \emph{arXiv preprint arXiv:1412.6980}, 2014.

\bibitem[Knoll and Keyes(2004)]{knoll2004jacobian}
Dana~A Knoll and David~E Keyes.
\newblock Jacobian-free newton--krylov methods: a survey of approaches and applications.
\newblock \emph{Journal of Computational Physics}, 193\penalty0 (2):\penalty0 357--397, 2004.

\bibitem[Li et~al.(2024)Li, Basri, and Kluger]{li2024likelihood}
Henry Li, Ronen Basri, and Yuval Kluger.
\newblock Likelihood training of cascaded diffusion models via hierarchical volume-preserving maps.
\newblock In \emph{The Twelfth International Conference on Learning Representations}, 2024.
\newblock URL \url{https://openreview.net/forum?id=sojpn00o8z}.

\bibitem[Li and Todorov(2004)]{li2004iterative}
Weiwei Li and Emanuel Todorov.
\newblock Iterative linear quadratic regulator design for nonlinear biological movement systems.
\newblock In \emph{First International Conference on Informatics in Control, Automation and Robotics}, volume~2, pages 222--229. SciTePress, 2004.

\bibitem[Liu et~al.(2020)Liu, Chen, and Theodorou]{liu2020ddpnopt}
Guan-Horng Liu, Tianrong Chen, and Evangelos~A Theodorou.
\newblock Ddpnopt: Differential dynamic programming neural optimizer.
\newblock \emph{arXiv preprint arXiv:2002.08809}, 2020.

\bibitem[Meng et~al.(2023)Meng, Rombach, Gao, Kingma, Ermon, Ho, and Salimans]{meng2023distillation}
Chenlin Meng, Robin Rombach, Ruiqi Gao, Diederik Kingma, Stefano Ermon, Jonathan Ho, and Tim Salimans.
\newblock On distillation of guided diffusion models.
\newblock In \emph{Proceedings of the IEEE/CVF Conference on Computer Vision and Pattern Recognition}, pages 14297--14306, 2023.

\bibitem[Meng and Kabashima(2022)]{meng2022diffusion}
Xiangming Meng and Yoshiyuki Kabashima.
\newblock Diffusion model based posterior sampling for noisy linear inverse problems.
\newblock \emph{arXiv preprint arXiv:2211.12343}, 2022.

\bibitem[Ongie et~al.(2020)Ongie, Jalal, Metzler, Baraniuk, Dimakis, and Willett]{ongie2020deep}
Gregory Ongie, Ajil Jalal, Christopher~A Metzler, Richard~G Baraniuk, Alexandros~G Dimakis, and Rebecca Willett.
\newblock Deep learning techniques for inverse problems in imaging.
\newblock \emph{IEEE Journal on Selected Areas in Information Theory}, 1\penalty0 (1):\penalty0 39--56, 2020.

\bibitem[Oymak et~al.(2019)Oymak, Fabian, Li, and Soltanolkotabi]{oymak2019generalization}
Samet Oymak, Zalan Fabian, Mingchen Li, and Mahdi Soltanolkotabi.
\newblock Generalization guarantees for neural networks via harnessing the low-rank structure of the jacobian.
\newblock \emph{arXiv preprint arXiv:1906.05392}, 2019.

\bibitem[Pereira et~al.(2020)Pereira, Wang, Chen, Reed, and Theodorou]{pereira2020feynman}
Marcus Pereira, Ziyi Wang, Tianrong Chen, Emily Reed, and Evangelos Theodorou.
\newblock Feynman-kac neural network architectures for stochastic control using second-order fbsde theory.
\newblock In \emph{Learning for Dynamics and Control}, pages 728--738. PMLR, 2020.

\bibitem[Petersen et~al.(2008)Petersen, Pedersen, et~al.]{petersen2008matrix}
Kaare~Brandt Petersen, Michael~Syskind Pedersen, et~al.
\newblock The matrix cookbook.
\newblock \emph{Technical University of Denmark}, 7\penalty0 (15):\penalty0 510, 2008.

\bibitem[Rombach et~al.(2022)Rombach, Blattmann, Lorenz, Esser, and Ommer]{rombach2022high}
Robin Rombach, Andreas Blattmann, Dominik Lorenz, Patrick Esser, and Bj{\"o}rn Ommer.
\newblock High-resolution image synthesis with latent diffusion models.
\newblock In \emph{Proceedings of the IEEE/CVF conference on computer vision and pattern recognition}, pages 10684--10695, 2022.

\bibitem[Rout et~al.(2023)Rout, Chen, Kumar, Caramanis, Shakkottai, and Chu]{rout2023beyond}
Litu Rout, Yujia Chen, Abhishek Kumar, Constantine Caramanis, Sanjay Shakkottai, and Wen-Sheng Chu.
\newblock Beyond first-order tweedie: Solving inverse problems using latent diffusion.
\newblock \emph{arXiv preprint arXiv:2312.00852}, 2023.

\bibitem[Rout et~al.(2024)Rout, Raoof, Daras, Caramanis, Dimakis, and Shakkottai]{rout2024solving}
Litu Rout, Negin Raoof, Giannis Daras, Constantine Caramanis, Alex Dimakis, and Sanjay Shakkottai.
\newblock Solving linear inverse problems provably via posterior sampling with latent diffusion models.
\newblock \emph{Advances in Neural Information Processing Systems}, 36, 2024.

\bibitem[Sagun et~al.(2017)Sagun, Evci, Guney, Dauphin, and Bottou]{sagun2017empirical}
Levent Sagun, Utku Evci, V~Ugur Guney, Yann Dauphin, and Leon Bottou.
\newblock Empirical analysis of the hessian of over-parametrized neural networks.
\newblock \emph{arXiv preprint arXiv:1706.04454}, 2017.

\bibitem[Saharia et~al.(2022)Saharia, Chan, Saxena, Li, Whang, Denton, Ghasemipour, Gontijo~Lopes, Karagol~Ayan, Salimans, et~al.]{saharia2022photorealistic}
Chitwan Saharia, William Chan, Saurabh Saxena, Lala Li, Jay Whang, Emily~L Denton, Kamyar Ghasemipour, Raphael Gontijo~Lopes, Burcu Karagol~Ayan, Tim Salimans, et~al.
\newblock Photorealistic text-to-image diffusion models with deep language understanding.
\newblock \emph{Advances in Neural Information Processing Systems}, 35:\penalty0 36479--36494, 2022.

\bibitem[Sasaki et~al.(2022)Sasaki, Ho, and Lightsey]{sasaki2022nonlinear}
Tomohiro Sasaki, Koki Ho, and E~Glenn Lightsey.
\newblock Nonlinear spacecraft formation flying using constrained differential dynamic programming.
\newblock In \emph{Proceedings of AAS/AIAA Astrodynamics Specialist Conference}, 2022.

\bibitem[Song et~al.(2024)Song, Kwon, Zhang, Hu, Qu, and Shen]{song2024solving}
Bowen Song, Soo~Min Kwon, Zecheng Zhang, Xinyu Hu, Qing Qu, and Liyue Shen.
\newblock Solving inverse problems with latent diffusion models via hard data consistency.
\newblock \emph{arXiv preprint arXiv:2307.08123}, 2024.

\bibitem[Song et~al.(2020)Song, Meng, and Ermon]{song2020denoising}
Jiaming Song, Chenlin Meng, and Stefano Ermon.
\newblock Denoising diffusion implicit models.
\newblock \emph{arXiv preprint arXiv:2010.02502}, 2020.

\bibitem[Song et~al.(2022)Song, Vahdat, Mardani, and Kautz]{song2022pseudoinverse}
Jiaming Song, Arash Vahdat, Morteza Mardani, and Jan Kautz.
\newblock Pseudoinverse-guided diffusion models for inverse problems.
\newblock In \emph{International Conference on Learning Representations}, 2022.

\bibitem[Song and Ermon(2019)]{song2019generative}
Yang Song and Stefano Ermon.
\newblock Generative modeling by estimating gradients of the data distribution.
\newblock \emph{Advances in neural information processing systems}, 32, 2019.

\bibitem[Song et~al.(2021)Song, Sohl-Dickstein, Kingma, Kumar, Ermon, and Poole]{song2021scorebased}
Yang Song, Jascha Sohl-Dickstein, Diederik~P Kingma, Abhishek Kumar, Stefano Ermon, and Ben Poole.
\newblock Score-based generative modeling through stochastic differential equations.
\newblock In \emph{International Conference on Learning Representations}, 2021.
\newblock URL \url{https://openreview.net/forum?id=PxTIG12RRHS}.

\bibitem[Srivastava et~al.(2022)Srivastava, Rastogi, Rao, Shoeb, Abid, Fisch, Brown, Santoro, Gupta, Garriga-Alonso, et~al.]{srivastava2022beyond}
Aarohi Srivastava, Abhinav Rastogi, Abhishek Rao, Abu Awal~Md Shoeb, Abubakar Abid, Adam Fisch, Adam~R Brown, Adam Santoro, Aditya Gupta, Adri{\`a} Garriga-Alonso, et~al.
\newblock Beyond the imitation game: Quantifying and extrapolating the capabilities of language models.
\newblock \emph{arXiv preprint arXiv:2206.04615}, 2022.

\bibitem[Tassa et~al.(2007)Tassa, Erez, and Smart]{tassa2007receding}
Yuval Tassa, Tom Erez, and William Smart.
\newblock Receding horizon differential dynamic programming.
\newblock \emph{Advances in neural information processing systems}, 20, 2007.

\bibitem[Tassa et~al.(2014)Tassa, Mansard, and Todorov]{tassa2014control}
Yuval Tassa, Nicolas Mansard, and Emo Todorov.
\newblock Control-limited differential dynamic programming.
\newblock In \emph{2014 IEEE International Conference on Robotics and Automation (ICRA)}, pages 1168--1175. IEEE, 2014.

\bibitem[Todorov and Li(2005)]{todorov2005generalized}
Emanuel Todorov and Weiwei Li.
\newblock A generalized iterative lqg method for locally-optimal feedback control of constrained nonlinear stochastic systems.
\newblock In \emph{Proceedings of the 2005, American Control Conference, 2005.}, pages 300--306. IEEE, 2005.

\bibitem[Zhang et~al.(2017)Zhang, Zuo, Chen, Meng, and Zhang]{zhang2017beyond}
Kai Zhang, Wangmeng Zuo, Yunjin Chen, Deyu Meng, and Lei Zhang.
\newblock Beyond a gaussian denoiser: Residual learning of deep cnn for image denoising.
\newblock \emph{IEEE transactions on image processing}, 26\penalty0 (7):\penalty0 3142--3155, 2017.

\end{thebibliography}
\bibliographystyle{plainnat}

\medskip



\newpage
\section*{NeurIPS Paper Checklist}

\begin{enumerate}

\item {\bf Claims}
    \item[] Question: Do the main claims made in the abstract and introduction accurately reflect the paper's contributions and scope?
    \item[] Answer: \answerYes{} 
    \item[] Justification: We demonstrate our results through rigorous analysis of our algorithm and extensive experiments on multiple inverse problem settings over several datasets.
    \item[] Guidelines:
    \begin{itemize}
        \item The answer NA means that the abstract and introduction do not include the claims made in the paper.
        \item The abstract and/or introduction should clearly state the claims made, including the contributions made in the paper and important assumptions and limitations. A No or NA answer to this question will not be perceived well by the reviewers. 
        \item The claims made should match theoretical and experimental results, and reflect how much the results can be expected to generalize to other settings. 
        \item It is fine to include aspirational goals as motivation as long as it is clear that these goals are not attained by the paper. 
    \end{itemize}

\item {\bf Limitations}
    \item[] Question: Does the paper discuss the limitations of the work performed by the authors?
    \item[] Answer: \answerYes{} 
    \item[] Justification: Yes, the paper discusses the runtime cost of the work, and provides an equivalent budget analysis, where it still demonstrates competitive performance on each benchmark.

\item {\bf Theory Assumptions and Proofs}
    \item[] Question: For each theoretical result, does the paper provide the full set of assumptions and a complete (and correct) proof?
    \item[] Answer: \answerYes{} 
    \item[] Justification: The paper provides full proofs for all theory in the appendix.

    \item {\bf Experimental Result Reproducibility}
    \item[] Question: Does the paper fully disclose all the information needed to reproduce the main experimental results of the paper to the extent that it affects the main claims and/or conclusions of the paper (regardless of whether the code and data are provided or not)?
    \item[] Answer: \answerYes{} 
    \item[] Justification: The paper discloses all hyperparameters and implementation details in the appendix.

\item {\bf Open access to data and code}
    \item[] Question: Does the paper provide open access to the data and code, with sufficient instructions to faithfully reproduce the main experimental results, as described in supplemental material?
    \item[] Answer: \answerYes{} 
    \item[] Justification: The paper provides open access to the data, which is publicly available. The authors will release code upon acceptance.

\item {\bf Experimental Setting/Details}
    \item[] Question: Does the paper specify all the training and test details (e.g., data splits, hyperparameters, how they were chosen, type of optimizer, etc.) necessary to understand the results?
    \item[] Answer: \answerYes{} 
    \item[] Justification: The paper provides all details in the appendix.

\item {\bf Experiment Statistical Significance}
    \item[] Question: Does the paper report error bars suitably and correctly defined or other appropriate information about the statistical significance of the experiments?
    \item[] Answer: \answerNA{} 
    \item[] Justification: Experiments for other works do not provide error bars, therefore error bars would not benefit the analysis in this paper.

\item {\bf Experiments Compute Resources}
    \item[] Question: For each experiment, does the paper provide sufficient information on the computer resources (type of compute workers, memory, time of execution) needed to reproduce the experiments?
    \item[] Answer: \answerYes{} 
    \item[] Justification: Experiments can be run on any GPU A4000 or later.
    
\item {\bf Code Of Ethics}
    \item[] Question: Does the research conducted in the paper conform, in every respect, with the NeurIPS Code of Ethics \url{https://neurips.cc/public/EthicsGuidelines}?
    \item[] Answer: \answerYes{} 
    \item[] Justification: We confirm to the NeurIPS Code of Ethics in every respect.

\item {\bf Broader Impacts}
    \item[] Question: Does the paper discuss both potential positive societal impacts and negative societal impacts of the work performed?
    \item[] Answer: \answerYes{} 
    \item[] Justification: The paper discusses this in the appendix.
    
\item {\bf Safeguards}
    \item[] Question: Does the paper describe safeguards that have been put in place for responsible release of data or models that have a high risk for misuse (e.g., pretrained language models, image generators, or scraped datasets)?
    \item[] Answer: \answerNA{} 
    \item[] Justification: The results in this paper paper do not have high risk for misuse.

\item {\bf Licenses for existing assets}
    \item[] Question: Are the creators or original owners of assets (e.g., code, data, models), used in the paper, properly credited and are the license and terms of use explicitly mentioned and properly respected?
    \item[] Answer: \answerYes{} 
    \item[] Justification: We credit all creators and original owners of assets.

\item {\bf New Assets}
    \item[] Question: Are new assets introduced in the paper well documented and is the documentation provided alongside the assets?
    \item[] Answer: \answerNA{} 
    \item[] Justification: No new assets are introduced.

\item {\bf Crowdsourcing and Research with Human Subjects}
    \item[] Question: For crowdsourcing experiments and research with human subjects, does the paper include the full text of instructions given to participants and screenshots, if applicable, as well as details about compensation (if any)? 
    \item[] Answer: \answerNA{} 
    \item[] Justification: No research is performed with human subjects.

\item {\bf Institutional Review Board (IRB) Approvals or Equivalent for Research with Human Subjects}
    \item[] Question: Does the paper describe potential risks incurred by study participants, whether such risks were disclosed to the subjects, and whether Institutional Review Board (IRB) approvals (or an equivalent approval/review based on the requirements of your country or institution) were obtained?
    \item[] Answer: \answerNA{} 
    \item[] Justification: No research is performed with human subjects.

\end{enumerate}

\newpage
\appendix

\section{Impact Statement}
This paper builds on a large body of existing work and presents an improved technique for solving generic nonlinear inverse problems, which can be seen as a generalization of guided diffusion modeling. Controlling the diffusion process in a generative model has many societal applications, and thus a broad range of downstream impacts. We believe that understanding the capabilities and limitations of such models in a public forum and open community is essential for practical and responsible integration of these technologies with society. However, the ideas presented in this work, as well as any other work in this field, must be deployed with caution to the inherent dangers of these technologies. 

\section{Deriving the Iterative Linear Quadratic Regulator (iLQR)}
\label{ilqr_derivation}
Differential Dynamic Programming (DDP) is a very popular trajectory optimization algorithm
that has a rich history of theoretical results \cite{jacobson1968new} as well as successful practical applications in robotics \cite{tassa2007receding, tassa2014control}, aerospace \cite{houghton2022path, sasaki2022nonlinear} and biomechanics \cite{todorov2005generalized}. It falls under the class of \textit{indirect methods} for trajectory optimization, wherein Bellman's principle of optimality defines the so-called optimal value-function which in turn can be used to determine the optimal control. This is in contrast to so-called \textit{direct methods} which cast the problem at hand into a nonlinear constrained optimization problem. 

To formulate an optimal control algorithm we first define the state transition function of a dynamical system as
\begin{equation}
    \vx_{t-1} = \vh(\vx_t,\, \vu_t).
\end{equation}

The next ingredient that we need for our optimal control approach is a cost function $J(\vx_t,\,\vu_t) \in \Rb$. This is used to define a performance criterion that iLQR can optimize with respect to the set of controls $\{\vu_t\}_{t=T}^{t=1}$ (i.e., the control trajectory going backwards from time $t=T$ to $t=1$). The cost-function is defined as follows:
\begin{equation}
    \label{eq:cost_function}
    J_T = \sum_{t=T}^{1} \ell_t(\vx_t,\,\vu_t) + \ell_0(\vx_0),
\end{equation}
where, $\ell_t$ and $\ell_0$ are scalar-valued functions which are commonly referred to as the \textit{running} cost-function and the \textit{terminal} cost-function respectively.  
\par To obtain the sequence of optimal controls, we employ the dynamic programming principle. To do so, we first introduce the notion of the Value-function defined as follows:
\begin{align}
    \label{eq:value_function_definition}
    V(\vx_t,\,t) &= \min_{\{\vu_n\}_{n=t}^{n=1}} J_t= \min_{\{\vu_n\}_{n=t}^{n=1}} \big[ \sum_{n=t}^{1} \ell_n(\vx_n,\,\vu_n) + \ell_0(\vx_0)\big]
\end{align}
Intuitively, the Value-function resembles the \textit{optimal cost-to-go} starting from time step $t$ and state $\vx_t$ until the end of the time horizon (i.e., $t=0$). Using this definition, one can easily derive the following recursive relation also known as \textit{Bellman's Principle of Optimality}:
\begin{equation}
    V(\vx_t,\,t) = \min_{\vu_t}\Big[\ell_t(\vx_t,\,\vu_t)+V(\vx_{t-1},\,t-1)\Big].
\end{equation}

A often useful defintion used in the derivation of the iLQR Riccati equations is that of the State-Action Value-Function $Q(\vx_t,\,\vu_t)$ given by, 
\begin{align}
    \label{eq:state_action_value_function_definition}
    &Q(\vx_t,\,\vu_t) = \ell_t(\vx_t,\,\vu_t)+V(\vx_{t-1},\,t-1) \\
    &\text{Therefore, }V(\vx_t,\,t) = \min_{\vu_t} Q(\vx_t,\,\vu_t) 
    \label{eq:v_q_relation}
\end{align}
A sketch of the derivation of the Riccati equations is as follows: we take second-order Taylor expansions of both $ Q(\vx_t,\,\vu_t)$ and $V(\vx_t,\,t)$ around \textit{nominal} state and action trajectories of $\{\bar{\vx}_t\}_{t=T}^{t=0}$ and $\{\bar{\vu}_t\}_{t=T}^{t=1}$ respectively. Next, we substitute these into Eq.\eqref{eq:state_action_value_function_definition} and equate the first- and second-order terms to yield the following relations between the derivatives of $Q,\,\ell$ and $V$:
\begin{align}
    \label{eq:Qx}
    Q_\mathbf{x} &= \ell_\mathbf{x} + \vh_{\mathbf{x}}^T V_\vx' \\
    Q_\mathbf{u} &= \ell_\mathbf{u} + \vh_{\mathbf{u}}^T V_\vx' \\
    Q_{\mathbf{x}\mathbf{x}} &= \ell_{\mathbf{x}\mathbf{x}} + \vh_{\mathbf{x}}^T V_{\mathbf{x}\mathbf{x}}' \vh_{\mathbf{x}} \\
    Q_{\mathbf{x}\mathbf{u}} &= \ell_{\mathbf{x}\mathbf{u}} + \vh_{\mathbf{x}}^T V_{\mathbf{x}\mathbf{x}}' \vh_{\mathbf{u}} \\
    Q_{\mathbf{u}\mathbf{x}} &= \ell_{\mathbf{u}\mathbf{x}} + \vh_{\mathbf{u}}^T V_{\mathbf{x}\mathbf{x}}' \vh_{\mathbf{x}} \\
    \label{eq:Quu}
    Q_{\mathbf{u}\mathbf{u}} &= \ell_{\mathbf{u}\mathbf{u}} + \vh_{\mathbf{u}}^T 
    V_{\mathbf{x}\mathbf{x}}' \vh_{\mathbf{u}},
\end{align}
where $\vh_{\vx_t}$ and $\vh_{\vu_t}$ are the Jacobians of the dynamics function $\vh(\vx_t,\,\vu_t)$, evaluated at time step $t$, w.r.t the state and the control vectors respectively. For ease of notation, we have dropped the subscript $t$ and therefore all derivatives above should be considered to be evaluated at time step $t$, while we use $V_\vx'$ and $V_{\vx\vx}'$ above to indicate the gradient and hessian of the Value-function evaluated at the next time step (i.e., at time step $t-1$).

Next, we substitute for the second-order approximation of $Q(\vx_t,\,\vu_t)$ into Eq. \eqref{eq:v_q_relation} and note that $\vu_t$ can be written in terms of the nominal control as follows: $$\vu_t = \bar{\vu}_t + \delta \vu_t.$$
This results in a quadratic objective w.r.t $\delta \vu_t$ and the minimization in Eq. \eqref{eq:v_q_relation} can be performed exactly resulting in the following optimal perturbation from the nominal control trajectory: 
\begin{equation}
    \delta \vu_t^* = \vk_t + \vK_t\delta\vx_t
\end{equation}  
where, the feedforward and feedback gains are given by the following expressions:
\begin{align}
    \label{eq:ff_gain}
    \mathbf{k} &= -Q_{\mathbf{u}\mathbf{u}}^{-1} Q_{\mathbf{u}} \\
    \label{eq:fb_gain}
    \mathbf{K} &= -Q_{\mathbf{u}\mathbf{u}}^{-1} Q_{\mathbf{u}\mathbf{x}}
\end{align}
Finally, by substituting for the optimal $\delta\vu_t^*$ back into Eq.\eqref{eq:v_q_relation}, we can drop the $\min$ operator and equate the first- and second-order terms on both sides. This results the following Riccati equations:
\begin{align}
    \label{eq:Riccati_Vx}
    V_\mathbf{x} &= Q_{\mathbf{x}} - \mathbf{K}^T Q_{\mathbf{u}\mathbf{u}} \mathbf{k} \\
    \label{eq:Riccati_Vxx}
    V_{\mathbf{x}\mathbf{x}}
    &= Q_{\mathbf{x}\mathbf{x}} - \mathbf{K}^T Q_{\mathbf{u}\mathbf{u}} \mathbf{K}.
\end{align}
This concludes the sketch derivation of the Riccati equations. The algorithm roughly proceeds as follows: 
\begin{enumerate}
    \item We start with an initial guess of the the nominal control trajectory $\{\bar{\vu}_t\}_{t=T}^{1}$ and generate the corresponding nominal state trajectory $\{\bar{\vx}_t\}_{t=T}^{0}$ using $\mathbf{x}_t = \mathbf{h}(\mathbf{x}_{t+1}, \mathbf{u}_{t+1})$.
    \item By noticing from Eq. \eqref{eq:value_function_definition} that $V(\vx_0,\,0)=\ell(\vx_0)$ we can obtain expressions for $V_\vx$ and $V_{\vx\vx}$ evaluated at $\bar{\vx}_0$.
    \item Next, we compute the derivatives of $Q$ given by equations. \eqref{eq:Qx}-\eqref{eq:Quu} using $\{\bar{\vu}_t\}_{t=T}^{1}$ and  $\{\bar{\vx}_t\}_{t=T}^{1}$.
    \item Using the derivatives of $Q$, we can compute the feedforward and feedback gains using equations \eqref{eq:ff_gain}-\eqref{eq:fb_gain}.
    \item Finally, using the Riccati equations \eqref{eq:Riccati_Vx}-\eqref{eq:Riccati_Vxx}, we can propagate both $V_\vx$ and $V_{\vx\vx}$ one step backwards in time. 
    \item We then repeat the steps $3,\,4$ and $5$ until we backpropagate the derivatives of $V$ to time step $t=T$.
    \item This completes one iteration of iLQR. At the end of each iteration the gains are used to produce the updated nominal control trajectory as follows:
    \begin{equation}
        \bar{\vu}_t^* = \bar{\vu}_t + \alpha \vk + \vK(\bar{\vx_t} - \vx_t)
    \end{equation}
    where, $\vx_t$ is the state obtained by unrolling the dynamics subject to the updated controls: $$\mathbf{x}_t = \vh(\mathbf{x}_{t + 1}, \bar{\vu}^*_{t + 1}).$$
    \item The new nominal control trajectory $\bar{\vu}_t^*$ is used to produce a new nominal state trajectory $\bar{\vx}_t^*$ and the algorithm is repeated from step 2 onwards until convergence or a fixed number of iterations. 
\end{enumerate}

\section{Proofs}

\opcthm*

\begin{proof}
    We demonstrate the result via induction for $t = 1, \dots, T$.
    
    Since we assume that $\ell_{\mathbf{u}\mathbf{u}} = \mathbf{0}$, $V_{\mathbf{x}\mathbf{x}}$ vanishes: 
    \begin{align}
        V_{\mathbf{x}\mathbf{x}} 
        &= Q_{\mathbf{x}\mathbf{x}} - Q_{\mathbf{u}\mathbf{x}}^T Q_{\mathbf{u}\mathbf{u}}^{-1} Q_{\mathbf{u}\mathbf{x}} \\
        &= h_{\mathbf{x}}^T V_{\mathbf{x}\mathbf{x}}' h_{\mathbf{x}} - h_{\mathbf{x}}^T V_{\mathbf{x}\mathbf{x}}' (V_{\mathbf{x}\mathbf{x}}')^{-1} V_{\mathbf{x}\mathbf{x}}' h_{\mathbf{x}} \\
        &= \mathbf{0}.
    \end{align}

    Similarly, $V_{\mathbf{x}}$ also greatly simplifies as
    \begin{align}
        V_{\mathbf{x}} &= Q_{\mathbf{x}} + Q_{\mathbf{u}\mathbf{x}}^T Q_{\mathbf{u}\mathbf{u}}^{-1} Q_{\mathbf{u}} \\
        &= h_\mathbf{x}^T V_\mathbf{x}' + h_\mathbf{x}^T V_{\mathbf{x}\mathbf{x}}' (V_{\mathbf{x}\mathbf{x}}')^{-1} V_\mathbf{x}' \\
        &= h_\mathbf{x}^T V_\mathbf{x}'.
        \label{eq:output_perturbation_mode_Vx}
    \end{align}
    
    Turning to the Tikhonov regularized feedforward term,
    \begin{align}
        \mathbf{k} 
        &= -Q_{\mathbf{u}\mathbf{u}}^{-1} Q_{\mathbf{u}} \\
        &= -(h_{\mathbf{x}}^T \underbrace{V_{\mathbf{x}\mathbf{x}}}_{\mathbf{0}} h_{\mathbf{x}} + \alpha \mathbf{I})^{-1} Q_{\mathbf{u}} \\
        &= -(\mathbf{0} + \alpha \mathbf{I})^{-1} Q_{\mathbf{u}} \\
        &= -\frac{1}{\alpha} V_\mathbf{x}'.
    \end{align}
    Finally, the feedback term disappears due to the vanishing $V_{\mathbf{x}\mathbf{x}}$
    \begin{align}
        \mathbf{K} 
        &= -Q_{\mathbf{u}\mathbf{u}}^{-1} Q_{\mathbf{u}\mathbf{x}} \\
        &= \mathbf{0}.
    \end{align}

    Explicitly denoting the dependence of $V_\mathbf{x}$ and $V_\mathbf{x}'$ on $t$, we can rewrite Eq. \ref{eq:output_perturbation_mode_Vx} as
    \begin{align*}
        V_\mathbf{x}^{(t)} 
        &= h_\mathbf{x}^T V_\mathbf{x}^{(t - 1)} \\
        &= \frac{\partial \mathbf{x}_{t-1}}{\partial \mathbf{x}_{t}} \frac{\partial}{\partial \mathbf{x}_{t - 1}} V.
    \end{align*}
    Combining this observation with the fact that $\ell_0 = -\log p(\mathbf{y} | \mathbf{x}_0)$, we can conclude that
    \begin{equation}
        V_\mathbf{x}^{(t)} = -\nabla_{\mathbf{x}_t} \log p(\mathbf{y} | \mathbf{x}_0),
    \end{equation}
    where $\mathbf{x}_0$ depends on $\mathbf{x}_t$ via the state transition function $\mathbf{h}$ (Eq. \ref{eq:output_perturbation}).
    Therefore, we have that
    \begin{align*}
        \mathbf{k} &= -\frac{1}{\alpha} V_\mathbf{x}' \\
        &= \frac{1}{\alpha} \nabla_{\mathbf{x}_t} \log p(\mathbf{y} | \mathbf{x}_0) \\
        \mathbf{K} &= 0.
    \end{align*}
    Finally, given our action update (Eq. \ref{eq:locally_optimal_action_update}), we can conclude our desired result
    \begin{equation}
        \mathbf{u}_t = \frac{1}{\alpha}  \nabla_{\mathbf{x}_t} \log p(\mathbf{y} | \mathbf{x}_0).
    \end{equation}

\end{proof}

\opcdps*
\begin{proof}
    Substituting in $\alpha = \frac{1}{g(t)^2 \Delta t}$ to Eq. \ref{eq:opc_action}, we observe that Eq. \ref{eq:output_perturbation} can now be written as
    \begin{equation}
        \mathbf{x}_{t - 1}
    = [f(\mathbf{x}_t) - \frac{1}{2} g(t)^2 (\nabla_{\mathbf{x}_t} \log p_t(\mathbf{x}_t) + \nabla_{\mathbf{x}_t} \log p_t(\mathbf{y} | \mathbf{x}_0))] \Delta t.
    \label{eq:opc_dps_eq}
    \end{equation}
    Under the determinstic sampler, we can conclude that $\log p_t(\mathbf{y} | \mathbf{x}_0) = \log p_t(\mathbf{y} | \mathbf{x}_t)$, since each $\mathbf{x}_t$ has a \textit{unique} path through the sample space. Therefore, we conclude that Eq. \ref{eq:opc_dps_eq} resembles the ideal posterior sampler equation \ref{eq:dps_discretized}. We conclude our proof.
\end{proof}

\ipcthm*

\begin{proof}
    We similarly demonstrate the result via induction for $t = 1, \dots, T$.

    Again, assuming that $\ell_{\mathbf{u}\mathbf{u}} = 0$, $V_{\mathbf{x}\mathbf{x}}$ vanishes:
    \begin{align}
        V_{\mathbf{x}\mathbf{x}} 
        &= Q_{\mathbf{x}\mathbf{x}} - Q_{\mathbf{u}\mathbf{x}}^T Q_{\mathbf{u}\mathbf{u}}^{-1} Q_{\mathbf{u}\mathbf{x}} \\
        &= Q_{\mathbf{x}\mathbf{x}} - Q_{\mathbf{x}\mathbf{x}} (\underbrace{\ell_{\mathbf{u}\mathbf{u}}}_{=\mathbf{0}} + Q_{\mathbf{x}\mathbf{x}})^{-1} Q_{\mathbf{x}\mathbf{x}} \\
        &= \mathbf{0},
    \end{align}
    whereas $V_{\mathbf{x}}$ greatly simplifies as
    \begin{align}
        V_{\mathbf{x}} &= Q_{\mathbf{x}} + Q_{\mathbf{u}\mathbf{x}}^T Q_{\mathbf{u}\mathbf{u}}^{-1} Q_{\mathbf{u}} \\
        &= h_\mathbf{x}^T V_\mathbf{x}'.
    \end{align}
    
    Turning to the feedforward and feedback terms, we have
    \begin{align}
        \mathbf{k} 
        &= -Q_{\mathbf{u}\mathbf{u}}^{-1} Q_{\mathbf{u}} \\
        &= -(h_{\mathbf{x}}^T \underbrace{V_{\mathbf{x}\mathbf{x}}}_{\mathbf{0}} h_{\mathbf{x}} + \alpha \mathbf{I})^{-1} Q_{\mathbf{u}} \\
        &= -(\mathbf{0} + \alpha \mathbf{I})^{-1} Q_{\mathbf{u}} \\
        &= -\frac{1}{\alpha} h_\mathbf{x}^T V_\mathbf{x}',
    \end{align}
    and
    \begin{align}
        \mathbf{K} 
        &= -Q_{\mathbf{u}\mathbf{u}}^{-1} Q_{\mathbf{u}\mathbf{x}} \\
        &= \mathbf{0}.
    \end{align}

    We observe that
    \begin{align*}
        V_\mathbf{x}^{(t)} = -\frac{1}{\alpha} h_\mathbf{x}^T V_\mathbf{x}^{(t - 1)}.
    \end{align*}
    Therefore, noting that $V_\mathbf{x}^{(0)} = \log p(\mathbf{y} | \mathbf{x}_0)$, we have 
    \begin{align*}
        \mathbf{k} &= -V_\mathbf{x}^{(t)} \\
        &= -\frac{1}{\alpha} (h_\mathbf{x}^{(t)})^T V_\mathbf{x}^{(t - 1)} \\
        &= -\frac{1}{\alpha} \nabla_{\mathbf{x}_t} \log p(\mathbf{y} | \mathbf{x}_0) \\
        &= -\frac{1}{\alpha} \nabla_{\mathbf{x}_t} \log p(\mathbf{y} | \mathbf{x}_0(\mathbf{x}_t)).
    \end{align*}

    Applying the feedforward terms to the diffusion sampling process, we have
    \begin{align*}
        \mathbf{x}_{t-1} 
        &= (\mathbf{x}_t + \mathbf{u}_t) + [f(\mathbf{x}_t  + \mathbf{u}_t) \\
        &\hspace{.2in} - \frac{1}{2} g(t)^2 \nabla_{\mathbf{x}} \log p_t(\mathbf{x}_t + \mathbf{u}_t)] \Delta t.
    \end{align*}

    We define the intermediary variable
    \begin{equation}
        \widetilde{\mathbf{x}}_t = \mathbf{x}_t + \mathbf{u}_t,
    \end{equation}
    which has dynamics
    \begin{equation}
        \widetilde{\mathbf{x}}_t 
        = \widetilde{\mathbf{x}}_t + [f(\widetilde{\mathbf{x}}_t) - \frac{1}{2} g(t)^2 \nabla_{\mathbf{x}} \log p_t(\widetilde{\mathbf{x}}_t)] \Delta t + \mathbf{u}_t.
    \end{equation}

    We now can see that, letting $\alpha = \Delta t g(t)^2$, we obtain
    \begin{align*}
        \widetilde{\mathbf{x}}_t 
        &= \widetilde{\mathbf{x}}_t + [f(\widetilde{\mathbf{x}}_t) - \frac{1}{2} g(t)^2 (\nabla_{\mathbf{x}} \log p_t(\widetilde{\mathbf{x}}_t) +  \nabla_{\mathbf{x}} \log p_t(\mathbf{y} | \mathbf{x}_0))]\Delta t.
    \end{align*}.
\end{proof}

\section{Implementation}
For all experiments, we use publicly available datasets and pre-trained model weights. For the  FFHQ $256 \times 256$ experiments, we use the last 1K images of the dataset for evaluation. For MNIST, we do not use images directly in the inverse classification task. The images were only used for training the pretrained diffusion model.

For models, we used the pretrained weights from \cite{chung2023diffusion} for FFHQ $256 \times 256$ tasks, and the Hugging Face $\texttt{1aurent/mnist-28}$ diffusion model for MNIST experiments. No further training is performed on any models. Further hyperparameters can be found in Table \ref{table:ffhq_hyperparameters}. For the classifier $p(\mathbf{y}|\mathbf{x})$ in MNIST class-guided classification, we use a simple convolutional neural network with two convolutional layers and two MLP layers, trained on the entire MNIST dataset.

\begin{table}
\centering
\begin{adjustbox}{max width=\textwidth}
\begin{tabular}{@{}cccccc@{}}
    \toprule
    & SR $\times 4$ & Random Inpainting & Box Inpainting & Gaussian Deblurring & Motion Deblurring\\
    \midrule
    \texttt{T} & 50 & 50 & 50 & 50 & 50 \\
    \midrule
    \texttt{num\_iters} & 50 & 100 & 100 & 100 & 100 \\
    \midrule
    \texttt{step\_size} & $1e-3$ & $1e-3$ & $1e-3$ & $1e-3$ & $1e-3$ \\
    \midrule
    $\ell_0(\mathbf{x}_0)$ & $||\mathcal{A}(\mathbf{x}_0) - \mathbf{y}||$ & $||\mathcal{A}(\mathbf{x}_0) - \mathbf{y}||$ & $||\mathcal{A}(\mathbf{x}_0) - \mathbf{y}||$ & $||\mathcal{A}(\mathbf{x}_0) - \mathbf{y}||$ & $||\mathcal{A}(\mathbf{x}_0) - \mathbf{y}||$ \\
    \midrule
    $\alpha$ & $1e-4$ & $1e-4$ & $1e-4$ & $1e-4$ & $1e-4$ \\
    \midrule
    $\ell_t(\mathbf{x}_t, \mathbf{u}_t)$ & $\alpha ||\mathbf{u}_t||$ & $\alpha ||\mathbf{u}_t||$ & $\alpha ||\mathbf{u}_t||$ & $\alpha ||\mathbf{u}_t||$ & $\alpha ||\mathbf{u}_t||$ \\
    \midrule
    $k$ & $1$ & $1$ & $1$ & $1$ & $1$ \\
    \midrule
    \texttt{control\_mode} & input mode & input mode & input mode & input mode & input mode \\
    \bottomrule
\end{tabular}
\end{adjustbox}
\caption{Hyperparameters for FFHQ experiments.}
\label{table:ffhq_hyperparameters}
\end{table}

\begin{table*}
\centering
\begin{adjustbox}{max width=\textwidth}
\begin{tabular}{@{}cccccccccccccccc@{}}
    \toprule
    & \multicolumn{3}{c}{SR $\times 4$} & \multicolumn{3}{c}{Random Inpainting} & \multicolumn{3}{c}{Box Inpainting} & \multicolumn{3}{c}{Gaussian Deblurring} & \multicolumn{3}{c}{Motion Deblurring}\\
    \cmidrule(r){2-4}\cmidrule(lr){5-7}\cmidrule(l){8-10}\cmidrule(l){11-13}\cmidrule(l){14-16}
    & PSNR $\uparrow$ & SSIM $\uparrow$ & MSE $\downarrow$ & PSNR $\uparrow$ & SSIM $\uparrow$ & MSE $\downarrow$ & PSNR $\uparrow$ & SSIM $\uparrow$ & MSE $\downarrow$ & PSNR $\uparrow$ & SSIM $\uparrow$ & MSE $\downarrow$ & PSNR $\uparrow$ & SSIM $\uparrow$ & MSE $\downarrow$ \\
    \midrule
    Ours (T = 50) & \textbf{27.45} & 0.792 & \textbf{117.0} & \textbf{31.84} & \textbf{0.882} & \textbf{42.57} & \textbf{25.33} & 0.804 & \textbf{190.6} & \textbf{24.99} & 0.694 & \textbf{206.1} & \textbf{25.08} & 0.721 & \textbf{201.9} \\
    \midrule
    DPS (T = 1000) & 25.67 & 0.852 & 176.2 & 22.47 & 0.873 & 368.2 & 25.23 & \textbf{0.851} & 195.0 & 24.25 & \textbf{0.811} & 244.4 & 24.92 & \textbf{0.859} & 209.4 \\
    DDRM (T = 1000) & 25.36 & 0.835 & 189.3 & 22.24 & 0.869 & 388.2 & 9.19 & 0.319 & 7835 & 23.36 & 0.767 & 300.0 & - & - & -\\
    MCG (T = 1000) & 20.05 & 0.559 & 642.8 & 19.97 & 0.703 & 654.8 & 21.57 & 0.751 & 453.0 & 6.72 & 0.051 & 13838 & 6.72 & 0.055 & 13838\\
    PNP-ADMM & 26.55 & \textbf{0.865} & 143.9 & 11.65 & 0.642 & 4447 & 8.41 & 0.325 & 9377 & 24.93 & 0.812 & 208.9 & 24.65 & 0.825 & 222.9 \\
    Score-SDE (T = 1000) & 17.62 & 0.617 & 1124 & 18.51 & 0.678 & 916.4 & 13.52 & 0.437 & 2891 & 7.12 & 0.109 & 12620 & 6.58 & 0.102 & 14291 \\
    ADMM-TV & 23.86 & 0.803 & 267.4 & 17.81 & 0.814 & 1076 & 22.03 & 0.784 & 407.5 & 22.37 & 0.801 & 376.8 & 21.36 & 0.758 & 475.4 \\
    \bottomrule
\end{tabular}
\end{adjustbox}
\caption{Quantitative evaluation (PSNR, SSIM, MSE) of performance on inverse problems on the FFHQ $256$x$256$-1K dataset.}
\label{table:ffhq_psnr}
\end{table*}

\subsection{High Dimensional Control}
\label{sec:hdc}

To speed up our proposed method, we leverage the following three modifications to the standard iLQR algorithm.

\paragraph{Randomized Low-Rank Approximation}
The first and second order terms in Eqs. (\ref{eq:input_Qx}-\ref{eq:doc_vxx}) are corresponding Taylor expansions of deep neural functions. Even with the use of automatic differentiation libraries, the formation of these matrices is incredibly expensive, requiring at least $\dim(\mathbf{x})$ backpropagation passes (where $\dim(\mathbf{x}) \approx 39B$ in some experiments). To reduce the cost of computing these matrices, we utilize their known low rank structure \cite{sagun2017empirical,oymak2019generalization}.

Leveraging advanced techniques in randomized numerical linear algebra, we estimate Eqs. (\ref{eq:input_Qx}-\ref{eq:doc_vxx}) using randomized SVD \cite{halko2011finding}. For any matrix $\mathbf{A} \in \mathbb{R}^{m \times n}$ this is a four step process. 1) We sample a random matrix $\boldsymbol\Omega \sim \mathcal{N}(\mathbf{0}, \mathbf{I}_{n \times k})$. 2) We obtain $\mathbf{A} \boldsymbol\Omega = \mathbf{Y} \in \mathbb{R}^{m \times k}$. 3) We form a basis over the columns of $\mathbf{Y}$, e.g. by taking the $\mathbf{Q}$ matrix in a $\mathbf{Q}\mathbf{R}$ factorization $\mathbf{Q}\mathbf{R} = \mathbf{Y}$. 4) We approximate $\mathbf{A} \approx \mathbf{Q}^T \mathbf{Q}\mathbf{A}$.

Notably, we observe that when $\mathbf{A}$ is a Jacobian (or Hessian) matrix, it can be approximated purely through Jacobian-vector and vector-Jacobian (Hessian-vector and vector-Hessian, resp.) products --- \textit{without ever materializing $\mathbf{A}$ itself}. Moreover, a key result in randomized linear algebra is that this algorithm can approximate $\mathbf{A}$ up to accuracy $\mathcal{O}(mnk\sigma_{k + 1})$ (Theorem 1.1 in \cite{halko2011finding}). Notably, if $\mathbf{A}$ has low rank structure where $\exists k$ such that the $k+1$th singular value $\sigma_{k+1} = 0$, then the approximation is exact.

\begin{table*}
\centering
\begin{adjustbox}{max width=\textwidth}
\begin{tabular}{@{}ccccccccc@{}}
    \toprule
    & \multicolumn{4}{c}{SR $\times 4$} & \multicolumn{4}{c}{Random Inpainting}\\
    \cmidrule(r){2-5}\cmidrule(lr){6-9}
    & LPIPS $\downarrow$ & PSNR $\uparrow$ & SSIM $\uparrow$ & MSE $\downarrow$ & LPIPS $\downarrow$ & PSNR $\uparrow$ & SSIM $\uparrow$ & MSE $\downarrow$ \\
    \midrule
    $k=0$ & 0.254 & 24.00 & 0.691 & 141.2 & 0.121 & 28.33 & 0.755 & 56.74 \\
    $k=1$ & 0.171 & 27.45 & 0.792 & 117.0 & 0.053 & 31.84 & 0.882 & 42.57 \\
    $k=4$ & 0.171 & 27.47 & 0.794 & 116.4 & 0.052 & 31.99 & 0.883 & 41.12 \\
    $k=16$ & 0.170 & 27.43 & 0.799 & 117.5 & 0.050 & 32.12 & 0.891 & 39.90 \\
    \bottomrule
\end{tabular}
\end{adjustbox}
\caption{Ablative study on the effect of \textit{rank} in the low rank and matrix-free approximations on performance (LPIPS, PSNR, SSIM, NMSE) of our proposed model on the FFHQ $256$x$256$-1K dataset dataset.}
\label{table:rank_ablation}
\end{table*}

\begin{table*}
\centering
\begin{adjustbox}{max width=\textwidth}
\begin{tabular}{@{}lcccccccc@{}}
    \toprule
    & \multicolumn{4}{c}{SR $\times 4$} & \multicolumn{4}{c}{Random Inpainting}\\
    \cmidrule(r){2-5}\cmidrule(lr){6-9}
    & LPIPS $\downarrow$ & PSNR $\uparrow$ & SSIM $\uparrow$ & MSE $\downarrow$ & LPIPS $\downarrow$ & PSNR $\uparrow$ & SSIM $\uparrow$ & MSE $\downarrow$ \\
    \midrule
    $\alpha=0$ & - & - & - & - & - & - & - & - \\
    $\alpha=1e-7$ & 0.173 & 27.49 & 0.794 & 115.9 & 0.050 & 31.80 & 0.879 & 42.96 \\
    $\alpha=1e-4$ & 0.171 & 27.45 & 0.792 & 117.0 & 0.053 & 31.84 & 0.882 & 42.57 \\
    $\alpha=1$ & 0.172 & 27.43 & 0.799 & 117.5 & 0.050 & 31.85 & 0.891 & 42.47 \\
    $\alpha$ from Lemma \ref{thm:opc_is_dps} & 0.170 & 27.44 & 0.788 & 117.3 & 0.051 & 31.86 & 0.880 & 42.44 \\
    \bottomrule
\end{tabular}
\end{adjustbox}
\caption{Ablative study on the effect of the Tikhonov regularization coefficient $\alpha$ on performance (LPIPS, PSNR, SSIM, NMSE) of our proposed model on the FFHQ $256$x$256$-1K dataset dataset. No results are reported for $\alpha = 0$, as the algorithm encountered numerical precision errors during matrix inversion.}
\label{table:alpha_ablation}
\end{table*}

\begin{table*}
\centering
\begin{adjustbox}{max width=\textwidth}
\begin{tabular}{@{}ccccccccc@{}}
    \toprule
    & \multicolumn{4}{c}{SR $\times 4$} & \multicolumn{4}{c}{Random Inpainting}\\
    \cmidrule(r){2-5}\cmidrule(lr){6-9}
    & LPIPS $\downarrow$ & PSNR $\uparrow$ & SSIM $\uparrow$ & MSE $\downarrow$ & LPIPS $\downarrow$ & PSNR $\uparrow$ & SSIM $\uparrow$ & MSE $\downarrow$ \\
    \midrule
    $T=10$ & 0.198 & 27.48 & 0.783 & 125.6 & 0.168 & 27.46 & 0.771 & 123.7 \\
    $T=20$ & 0.1923 & 31.79 & 0.859 & 117.0 & 0.108 & 34.41 & 0.910 & 42.57 \\
    $T=50$ & 0.171 & 27.45 & 0.792 & 90.79 & 0.053 & 31.84 & 0.882 & 40.56 \\
    $T=200$ & 0.155 & 28.55 & 0.811 & 43.05 & 0.048 & 32.05 & 0.899 & 23.17 \\
    \bottomrule
\end{tabular}
\end{adjustbox}
\caption{Ablative study on the effect of $T$ on performance (LPIPS, PSNR, SSIM, NMSE) of our proposed model on the FFHQ $256$x$256$-1K dataset dataset.}
\label{table:T_ablation}
\end{table*}

\paragraph{Matrix-Free Evaluation}
Inspired by matrix-free techniques in numerical optimization \cite{knoll2004jacobian}, we demonstrate a strategy for forming the action update \eqref{eq:locally_optimal_action_update} without materializing the costly $\dim(\mathbf{x}) \times \dim(\mathbf{x})$ matrices in the iLQR algorithm (\ref{eq:input_Qx}-\ref{eq:doc_vxx}), which we shall denote as an indexed set of matrices $\{\mathbf{A}_i\}$. We do this by forming projections of each $\mathbf{A}_i$ against a corresponding set of $\dim(\mathbf{x}) \times \ell$ column-orthogonal matrices $\{\mathbf{Q}_i\}$, which we denote as $\mathbf{B}_i := \mathbf{Q}_i^T \mathbf{A}_i$. These matrices can then be stored at reduced cost as $(\mathbf{Q}_i, \mathbf{B}_{i})$ pairs. 

Matrix multiplications between any $\mathbf{A}_i \mathbf{A}_j$ can then be approximated up to rank $\ell$ with respect to the projected matrix, $\mathbf{Q}_i\mathbf{A}_{i,\mathbf{Q}_i}$, i.e.
\begin{equation}
    \mathbf{A}_i \mathbf{A}_j \approx \mathbf{Q}_i\mathbf{B}_{i}\mathbf{Q}_j^T\mathbf{B}_{j}.
\end{equation}
However, to prevent materialization of the full size of any matrices, we drop the leading $\mathbf{Q}_i$, obtaining a new projected-matrix pair $(\mathbf{Q_k}, \mathbf{B_k})$, where $\mathbf{Q_k} = \mathbf{Q}_i$.

\paragraph{Adam Optimizer} Finally, we precondition gradients via the Adam optimizer \cite{kingma2014adam} before applying the feedback gains, rather than applying a backtracking line search \cite{tassa2014control}, resulting in the action update
\begin{equation}
    \mathbf{u}_t = \mathbf{P} \mathbf{k}_t + \mathbf{K}_t (\mathbf{x}_t - \mathbf{x}_t'),
\end{equation}
where $\mathbf{P}$ is the preconditioning matrix produced by the Adam optimizer. This reduces the overall run-time of the algorithm while still accounting for second-order information that respects the nonlinearity of the optimization landscape.

\subsection{Computational Complexity Analysis} Incorporating all three modifications, we can provide a realistic runtime and space complexity analysis of our presented algorithm with respect to the rank $k$, the data dimension $d$, diffusion steps $m$, and number of iLQR iterations $n$. 

Combining both the low rank and matrix-free approximations, we obtain the updated equations for input mode perturbation (where projection matrices are written as $\mathbf{P}$ to avoid overloading the $Q$ function notation):
\begin{align}
    Q_\mathbf{x} &= \vh_{\mathbf{x}}^T V_\vx' \\
    Q_\mathbf{u} &= \ell_\mathbf{u} + \vh_{\mathbf{x}}^T V_\vx' \\
    \mathbf{P} Q_{\mathbf{x}\mathbf{x}} \mathbf{P}^T = \mathbf{P} Q_{\mathbf{u}\mathbf{x}} \mathbf{P}^T = \mathbf{P} Q_{\mathbf{x}\mathbf{u}} \mathbf{P}^T &= \mathbf{P} \vh_{\mathbf{x}}^T V_{\mathbf{x}\mathbf{x}}' \vh_{\mathbf{x}} \mathbf{P}^T \\
    \mathbf{P} Q_{\mathbf{u}\mathbf{u}} \mathbf{P}^T &= \mathbf{P} \ell_{\mathbf{u}\mathbf{u}} \mathbf{P}^T + \mathbf{P} \vh_{\mathbf{x}}^T V_{\mathbf{x}\mathbf{x}}' \vh_{\mathbf{x}}\mathbf{P}^T.
\end{align}
To simplify notation, each projection matrix $\mathbf{P}$ is the same --- in reality, this need not be the case. Note that $\mathbf{Q}_x$ and $\mathbf{Q}_u$ are simply of size $d$ and therefore image-sized. For all our datasets, these each take 0.2 MB to store and are therefore negligible, and we do not project these variables. When $\ell_{\mathbf{u}\mathbf{u}}$ is diagonal (as it is in our case), we can obtain the projected inverse for $Q_{\mathbf{u}\mathbf{u}}$ as
\begin{equation}
    \mathbf{P} Q_{\mathbf{u}\mathbf{u}}^{-1} \mathbf{P}^T = \mathbf{P} \ell_{\mathbf{u}\mathbf{u}}^{-1} \mathbf{P}^T + \mathbf{P} \ell_{\mathbf{u}\mathbf{u}}^{-1} \mathbf{P}^T(\mathbf{C}^{-1} + \mathbf{P}^T\ell_{\mathbf{u}\mathbf{u}}^{-1}\mathbf{P})^{-1}\mathbf{P}\ell_{\mathbf{u}\mathbf{u}}^{-1} \mathbf{P}^T \hspace{.25in} \text{where } \mathbf{C} = \mathbf{P} \vh_{\mathbf{x}}^T V_{\mathbf{x}\mathbf{x}}' \vh_{\mathbf{x}}\mathbf{P}
\end{equation}
via a direct application of the Woodbury matrix inversion formula \cite{petersen2008matrix}, which has cost $\mathcal{O}(k^3 + kd^2)$. Finally, we compute the projected updates $ V_{\mathbf{x}\mathbf{x}}, \mathbf{K}$ as well as the full-precision $V_\mathbf{x}, \mathbf{k}$ terms via
\begin{align}
    \mathbf{k} &= -\mathbf{P}^T \mathbf{P} Q_{\mathbf{u}\mathbf{u}}^{-1} \mathbf{P}^T \mathbf{P} Q_{\mathbf{u}} \\
    V_\mathbf{x} &= Q_{\mathbf{x}} - \mathbf{P}^T \mathbf{P} \mathbf{K}^T \mathbf{P}^T \mathbf{P} Q_{\mathbf{u}\mathbf{u}} \mathbf{P}^T \mathbf{P} \mathbf{k} \\
    \mathbf{P} \mathbf{K} \mathbf{P}^T &= -\mathbf{P} Q_{\mathbf{u}\mathbf{u}}^{-1} \mathbf{P}^T \mathbf{P} Q_{\mathbf{u}\mathbf{x}} \mathbf{P}^T \\
    \mathbf{P} V_{\mathbf{x}\mathbf{x}} \mathbf{P}^T
    &= \mathbf{P} Q_{\mathbf{x}\mathbf{x}} \mathbf{P}^T - \mathbf{P} \mathbf{K}^T \mathbf{P}^T \mathbf{P} Q_{\mathbf{u}\mathbf{u}} \mathbf{P}^T \mathbf{P} \mathbf{K} \mathbf{P}^T.
\end{align}

Where applicable, we leverage vector-Jacobian products from standard automatic differentiation libraries (e.g. \texttt{torch.func.vjp}) which have runtime complexity $\mathcal{O}(1)$. Computing the $V_\mathbf{x}, V_{\mathbf{x}\mathbf{x}}, \mathbf{k}, \mathbf{K}$ terms in Eqs. \eqref{eq:ff_gain}-\eqref{eq:Riccati_Vxx} costs $\mathbf{O}(k^3 + kd^2)$ FLOPs in terms of matrix multiplications (dominated by the matrix inverse of $k \times k$ matrix $\mathbf{q}^T Q_{\mathbf{u}\mathbf{u}} \mathbf{q}$). Crucially, it incurs $\mathcal{O}(k)$ neural function evaluations (NFEs), which dominates the runtime of the algorithm. Since this computation is performed for each diffusion step and iLQR iteration, the total runtime complexity of our algorithm is $\mathcal{O}(nm(k^3 + kd^2))$ matrix multiplication FLOPs and $\mathcal{O}(nmk)$ NFEs, with $\mathcal{O}(mk^2 + d)$ space complexity. In terms of time complexity, the NFEs are the dominating cost, accounting for $~97\%$ of computation time.

\subsection{Sensitivity to Hyperparameters} In Tables \ref{table:rank_ablation}, \ref{table:alpha_ablation}, \ref{table:T_ablation}, we investigate the effect of the rank of the low rank approximation and matrix-free projections, the Tikhonov regularization coefficient $\alpha$, and the diffusion time $T$ on the performance of our method on the FFHQ $256$x$256$ dataset. We evaluate performance on the super-resolution and random inpainting tasks, with the same setup as in Section \ref{sec:experiments}. 

\paragraph{Low-Rank and Matrix-Free Rank} From Table \ref{table:rank_ablation}, it is clear that there is a significant performance gain from even a rank one approximation of the first- and second-order matrices. The gains from subsequent increases in the rank approximation diminish quickly. This is because increasing the rank of the approximation only improves the approximation of the second-order terms. The first order $V_\mathbf{x}, Q_\mathbf{x}, Q_\mathbf{u}$ terms are always modeled exactly in $\mathcal{O}(1)$ time per iteration due to their amenability to vector-Jacobian products. From Theorems \ref{thm:opc}-\ref{thm:ipc} we see that even when the second order terms are zero (i.e., the result of assumption $\ell_t = 0$), we exactly recover the true posterior sampler. Therefore, the second-order terms are less important, though still useful for imposing a quadratic trust-region regularization to the algorithm. Therefore, we ultimately choose $k=1$ for three reasons:
\begin{enumerate}
    \item the rank only affects the quadratic approximation of the iLQR algorithm (and does not affect our theoretical results in Theorems \ref{thm:opc}-\ref{thm:ipc})
    \item $k=1$ already allows second-order propagation of the quadratic trust-region regularization, and
    \item subsequent increases in $k$ have a minimal effect on the performance of the algorithm.
\end{enumerate}

\paragraph{Tikhonov Regularizer} Table \ref{table:alpha_ablation} demonstrates that our algorithm is relatively robust to the Tikhonov regularization parameter, except when $\alpha=0$. Under this condition, any ill-conditioning of $Q_{\mathbf{u}\mathbf{u}}$ results in division by zero errors, resulting in the failure of the algorithm. Therefore, we simply choose to let $\alpha=1e-4$, since the effect of Tikhonov regularizer is minimal.

\paragraph{Diffusion Steps} Finally, we observe in Table \ref{table:T_ablation} that increasing the diffusion time results in higher quality samples --- though at the cost of increased computation time. Therefore, choice of $T$ requires balancing computational cost and sample quality, and is ultimately highly user-dependent. When the computational and latency budget is relatively high, large $T$ can be used to improve sample quality. Conversely, when this budget is low, we find that even $T=20$ provides reasonable samples.


\end{document}